\def\eqref#1{equation~\ref{#1}}
\def\1{\bm{1}}
\DeclareMathAlphabet{\mathsfit}{\encodingdefault}{\sfdefault}{m}{sl}
\SetMathAlphabet{\mathsfit}{bold}{\encodingdefault}{\sfdefault}{bx}{n}
\DeclareMathOperator*{\argmin}{arg\,min}
\newcommand{\stkout}[1]{\ifmmode\text{\sout{\ensuremath{#1}}}\else\sout{#1}\fi}
\theoremstyle{plain}
\newtheorem{theorem}{Theorem}[section]
\newtheorem{proposition}[theorem]{Proposition}
\theoremstyle{definition}
\newtheorem{definition}[theorem]{Definition}
\theoremstyle{remark}
\title{
%SDSE: Density Estimation via Stochastic Diffusion and Symbolic Exploration
% MEMORISE: Multi-Mode Entropy Maximization via Optimized Recursive Inference and Symbolic Exploration
% \\or\\
 MESSY Estimation: Maximum-Entropy based Stochastic and Symbolic densitY Estimation
}
\author{\name Tony Tohme\thanks{Equal contribution.}  \email tohme@mit.edu \\
      % \addr Center for Computational Science and Engineering\\
      % \addr Department of Mechanical Engineering\\
      \addr Massachusetts Institute of Technology, USA.
      \AND
      \name Mohsen Sadr$^*$ \email msadr@mit.edu \\
      \addr Massachusetts Institute of Technology, USA.\\
      Paul Scherrer Institute, Switzerland.
      \AND
      \name Kamal Youcef-Toumi \email youcef@mit.edu \\
      % \addr Department of Mechanical Engineering\\
      \addr Massachusetts Institute of Technology, USA.
      \AND
      \name Nicolas G. Hadjiconstantinou \email ngh@mit.edu\\
      % \addr Department of Mechanical Engineering\\
      \addr Massachusetts Institute of Technology, USA.}
\newcommand{\argminnew}{\mathop{\mathrm{argmin}}}          % ASdeL
\begin{document}

\maketitle

\begin{abstract}
We introduce \textbf{MESSY} estimation, a \textbf{M}aximum-\textbf{E}ntropy based \textbf{S}tochastic and \textbf{S}ymbolic densit\textbf{Y} estimation method. The proposed approach recovers probability density functions symbolically from samples using moments of a  Gradient flow in which the ansatz serves as the driving force.  In particular, we construct a gradient-based drift-diffusion process that connects samples of the unknown distribution function to a guess symbolic expression. We then show that when the guess distribution has the maximum entropy form, the parameters of this distribution can be found efficiently by solving a linear system of equations constructed using the moments of the provided samples. Furthermore, we use Symbolic regression to explore the space of smooth functions and find optimal basis functions for the exponent of the maximum entropy functional leading to good conditioning. The cost of the proposed method for each set of selected basis functions is linear with the number of samples and quadratic with the number of basis functions. However, the underlying acceptance/rejection procedure for finding optimal and well-conditioned bases adds to the computational cost. We validate the proposed MESSY estimation method against other benchmark methods for the case of a bi-modal and a discontinuous density, as well as a density at the limit of physical realizability. We find that the addition of a symbolic search for basis functions improves the accuracy of the estimation at a reasonable additional computational cost. Our results suggest that the proposed method outperforms existing density recovery methods in the limit of a small to moderate number of samples by providing a low-bias and tractable symbolic description of the unknown density at a reasonable computational cost. 
\end{abstract}

\section{Introduction}

Recovering probability density functions from samples is one of the fundamental problems in statistics with many applications. For example, the traditional task of discovering the underlying dynamics governing the corresponding distribution function is strongly dependent on the quality of the density estimator \citep{rudy2017data}. Applications include particle physics \citep{patrignani2016review}, 
boundary conditions for multi-scale kinetic problems \citep{frezzotti2005mean,kon2014method}, and machine \mbox{learning \citep{song2020score}.}

%rarefied gas dynamics \citep{Bird}, 
Broadly speaking, two categories of methods have been developed for this task: parametric and non-parametric estimators. While parametric methods assume a restrictive ansatz for the underlying distribution function, non-parametric methods provide a more flexible density estimate by performing a kernel integration locally using nearby samples. Although non-parametric methods do not need any prior knowledge of the underlying distribution, they suffer from the unclear choice of kernel and its support  leading to bias and lack of moment matching. Examples of non-parametric density estimators include histogram and Kernel Density Estimation (KDE) \citep{rosenblatt1956remarks,jones1996brief,sheather2004density}.

On the other hand, parametric density estimators may allow matching of moments while introducing modeling error, since a guess for the distribution is required.  Parametric distributions include Gaussian, orthogonal expansion with respect to Gaussian using Hermite polynomials (also known as Grad's ansatz in kinetic theory) \citep{hermite1864nouveau,grad1949note,cai2015framework}, wavelet density estimation \citep{donoho1996density}, and Maximum Entropy Distribution (MED) \citep{kapur1989maximum,tagliani1999hausdorff,khinchin2013mathematical,hauck2008convex} function among others. Given only the mean and variance, information theory provides us with the Gaussian distribution function as the least biased density, which has been used extensively in the literature. However, including higher order moments in a similar way, i.e. \textit{moment problem}, raises further complications. For example in the context of kinetic theory, Grad proposed a closure that incorporates higher-order moments by considering a deviation from Gaussian using Hermite polynomials. Even though the information from higher moments is incorporated as the parameters of the polynomial expansion in Grad's ansatz, such a formulation suffers from not guaranteeing  positivity of the estimated density along with the introduction of bias. 

Among parametric density estimators, the Maximum Entropy Distribution (MED) function has been proposed in information theory as the least biased density estimate given a number of moments of the unknown distribution  \citep{kapur1989maximum}.  While MED
provides the least biased density estimate, it suffers from two limitations. First, the distribution parameters (Lagrange multipliers) can only be found by solving a convex  optimization problem with ill-conditioned Hessian \citep{dreyer1987maximisation,levermore1996moment}. The condition number increases either by increasing the order of the  matching moments  or approaching the limit of physical realizability which motivated the use of adaptive basis functions \citep{abramov2007improved,abramov2009multidimensional}. Second, MED only exists and is unique in bounded domains. While existence/uniqueness is guaranteed for recovering the distribution in the subspace occupied by the samples, the computational complexity associated with the direct computation of Lagrange multipliers has prevented researchers from deploying MED in practice. 

\paragraph{Related methods.} 

The problem of recovering a distribution function from samples has been investigated and studied before. We briefly review some of the work most relevant to our paper:

\textit{Data-driven maximum entropy distribution function:} Several attempts have been made in the literature to speed up the computation of Lagrange multipliers for MED using Neural Networks \citep{sadr2021coupling,porteous2021data,schotthofer2022neural} and Gaussian process regression \citep{sadr2020gaussian}. Unfortunately, these approaches are data-dependent with support only on the trained subspace of distributions. Similar to the standard MED and other related closures, the data-driven MED can only  handle polynomial moments as input, even though the data may be better represented with moments of other basis functions.

\textit{Learning an invertible map:} The idea is to train an invertible neural network that maps the samples to a known distribution function. Then the unknown distribution function is found by inverting the trained map with the known distribution as the input. This procedure is called  the normalizing flow technique \citep{rezende2015variational,dinh2016density,kingma2018glow,durkan2019neural,tzen2019neural,kobyzev2020normalizing,wang2022minimax}. This method has been used for re-sampling unknown distributions, e.g. Boltzmann generators \citep{noe2019boltzmann}, as well as density recovery such as AI-Feynmann \citep{udrescu2020aifeynman,udrescu2020aifeynman_2}. We note that AI-Feynman does not obtain the density from the samples directly; instead it first fits a density to the samples using the normalizing flow technique, constructs an input/output data set, then finds a simpler expression using symbolic regression. While invertible maps can be used to accurately predict densities, they can become expensive since for each problem one has to learn the parameters of the considered map via optimization. 

\textit{Diffusion map:} Instead of training for an invertible map, the diffusion map \citep{coifman2005geometric,coifman2006diffusion} constructs coordinates using eigenfunctions of Markov matrices. Using pairwise distances between samples, in this method a kernel matrix is constructed as a generator of the underlying Langevin diffusion process. As shown by \cite{li2023diffusion}, one can generate samples of the target distribution using Laplacian-adjusted Wasserstein gradient descent \citep{chewi2020svgd}. Unfortunately, this approach can become computationally expensive since it requires singular value decomposition of matrices of size equal to the number of samples.

\textit{Gradient flow}: The gradient flow method has gained attention in recent years \citep{villani2009optimal,song2020score,song2020improved}. In particular, a class of sampling methods has been devised for drawing samples from a given distribution function using Langevin dynamics with the gradient of log-density as the driving force \citep{liu2017stein,garbuno2020interacting,garbuno2020affine}. Yet, this approach does not provide the density of the samples by itself. In our paper, we benefit from this formulation to recover the parameters of a density ansatz.

\textit{Stein Variational Gradient Descent method (SVGD)}: Given a target density function, the SVGD method as a deterministic and non-parameterized Gradient method generates samples of the target by carrying out a dynamic system on particles where the velocity field includes  the grad-log of the target density (similar to the Gradient flow) and a kernel over particles \citep{liu2016stein}. SVGD has been derived by approximating a kernelized Wasserstein gradient flow of KL divergence \citep{liu2017stein}.  While further steps have been taken to improve this method, e.g. SVGD with moment matching \citep{liu2018stein},  similar to Gradient flow, this class of method cannot be used for  estimating the density itself from samples.

\textit{Wavelet and Conditional Renormalization Group method:}
One of the powerful methods in signal processing is the wavelet method \citep{mallat1999wavelet}, which may be considered as an  extension of the Fourier method and domain decomposition. The basic idea is to consider the data on a multiple grids/scales, where the contribution from the smallest frequencies are found on the coarsest mesh and the highest frequencies on the finest mesh. While this method has been extended to finding high dimensional probability density functions 
\citep{marchand2022wavelet,kadkhodaie2023learning}, it is not clear how much bias is introduced by the orthogonal wavelet bases.

\textit{KDE via diffusion:} In this method, the bandwidth of the kernel density estimation is computed using the minimum of mean integrated squared error and the fact that the KDE is the fundamental solution to a heat (more precisely Fokker-Planck) equation  \citep{botev2007nonparametric,botev2010kernel}. While improvement has been achieved in this direction, we note that the KDE-diffusion method suffers from smoothing effects which introduce bias. Moreover moments of the unknown distribution are not necessarily matched.

\textit{Symbolic regression:} Symbolic regression (SR) is a challenging task in machine learning that aims to identify analytical expressions that best describe the relationship between inputs and outputs of a given dataset. SR does not require any prior knowledge about the model structure. Traditional regression methods such as least squares \citep{wild1989nonlinear}, likelihood-based \citep{edwards1984likelihood, pawitan2001all%, tohme2023reliable
}, and Bayesian regression techniques \citep{lee1997bayesian, leonard2001bayesian, 
%tohme2020bayesian, vanslette2020general,
tohme2020generalized} use a fixed parametric model structure and only optimize for model parameters. SR optimizes for model structure and parameters simultaneously and hence is thought to be NP-hard, i.e. Non-deterministic Polynomial-time hard,  \citep{udrescu2020aifeynman, petersen2021deep, virgolin2022symbolic}. The SR problem has gained significant attention over recent years \citep{orzechowski2018we, la2021contemporary}, and several approaches have been suggested in the literature. Most methods adopt genetic algorithms \citep{koza1992genetic, schmidt2009distilling, tohme2023gsr}. Lately, researchers proposed using machine learning algorithms (e.g. Bayesian optimization, nonlinear least squares, neural networks, transformers, etc.) to solve the SR problem \citep{
%martius2016extrapolation,
sahoo2018learning, 
jin2019bayesian,
udrescu2020aifeynman_2, cranmer2020discovering, kommenda2020parameter, operonc++, biggio2021neural, mundhenk2021symbolic, petersen2021deep, valipour2021symbolicgpt, 
%razavi2022neural, 
zhang2022ps, 
%d2022deep, 
kamienny2022end}. While most SR methods are concerned with finding a map from the input to the output, very few have addressed the problem of discovering probability density functions \mbox{from samples \citep{udrescu2020aifeynman_2}.}
\newpage
\paragraph{Our Contributions.}
Our work improves the efficiency in determining the maximum entropy result for the unknown distribution. We specifically  develop a new method for determining the unknown parameters (Lagrange multipliers) of this distribution without solving the  optimization problem associated with this approach. This is achieved by relating the samples to the MED using Gradient flow, with  the grad-log of the MED guess distribution serving as the drift. This results in a linear inverse problem for the Lagrange multipliers that is significantly easier to solve compared to the aforementioned optimization problem.
We also propose a Monte Carlo search in the space of smooth functions for finding an optimal basis function for describing  (the exponent of) the maximum entropy ansatz. As a selection criterion, we rate randomly created basis functions according to the condition number associated with the coefficient (Hessian) matrix of the inverse problem for the Lagrange multipliers. This helps to maintain  good conditioning, which allows us to incorporate more degrees of freedom and recover the unknown density accurately. Discontinuous density functions are treated by considering only the domain supported by data and using a multi-level \mbox{solution process.}

The paper is organized as follows. In Section~\ref{sec:Gradient_flow_symbolic_theoretical_motivation} we review the concept of Gradient flow with grad-log of a known density as the drift. In Section~\ref{sec:Find_parameters_given_samples}, we show how  parameters of a guess MED may be found by computing the relaxation rates of the corresponding Gradient flow. Using the maximum entropy ansatz, in Section~\ref{sec:MED_as_ansatz_for_gradient_flow} we derive a linear inverse problem for finding the Lagrange multipliers without the need for solving an optimization problem. In Section~\ref{sec:symbolic_search_basis_function}, we propose a symbolic regression method for finding basis functions that can be used to increase degrees of freedom while maintaining good conditioning of the problem by construction.
In Section~\ref{sec:multilevel_recursive_alg}, we propose a generalization of the maximum entropy ansatz that allows including further degrees of freedom in a multi-level fashion. Section~\ref{sec:MESSY_estimation} presents the complete MESSY algorithm. 
In Section \ref{sec:Results}, we validate MESSY  by comparing its predictions  to those of benchmark density estimators in recovering distributions featuring discontinuities and bi-modality, as well as distributions close to the limit of realizability. Finally, in Section \ref{sec:conclusion}, we offer our conclusions and outlook.

\section{Gradient flow and theoretical motivation}
\label{sec:Gradient_flow_symbolic_theoretical_motivation}
Consider a set of samples of a random variable $\bm X$ from an unknown density distribution function $f(\bm x)$. Let our guess for this distribution function, the ``ansatz'', be denoted by $\hat{f}(\bm x)$. 
\\ \ \\
Instead of constructing a non-parametric approximation of the target density numerically from samples of $\bm X$ (like histogram or KDE) and then calculating its difference from the guess density $\hat{f}$, in this work we suggest measuring the distance using transport. In particular, we use the fact that the steady-state distribution of $\bm X(t)$ which follows the stochastic differential equation (SDE)
\begin{flalign}
d\bm X =  \nabla_{\bm x} \big[ \log \big(\hat{f}\,\big) \big] dt + \sqrt{2} d{\bm W}_t \label{eq:ito-process}
%\\ \Big[ X(t+\Delta t) = X(t) +\nabla_x \log(f) \Delta t + \sqrt{2} \mathcal{N}(0,1)  \Big] \nonumber 
\end{flalign}
is the distribution $\hat{f}$. Here, ${\bm W}_t$ is the standard Wiener process of dimension $\dim(\bm x)$. We note that Eq.~(\ref{eq:ito-process}) is known as the gradient flow (or Langevin dynamics) with grad-log of density as the force. We note that this drift differs from the score-based generative model in \cite{song2020score} where the drift is a function of time, i.e. $\nabla_{\bm x} \big[\log(\hat f(t))\big]$.
\\ \ \\
The distance of $f$ from $\hat{f}$ may be measured by the time required for the SDE with ${\bm X}(t=0)\sim f$ to reach steady state. Alternatively, one may compare the moments computed from the solution to $\bm X(t)$ against the input samples to measure this distance. Both these approaches are subject to numerical and statistical noise associated with the numerical scheme deployed in integrating Eq.~(\ref{eq:ito-process}). In the next section, we derive an efficient way of computing the parameters of our approximation $\hat{f}$ based on these ideas. We also show that the transition from $f$ to $\hat{f}$ is monotonic.

\section{Ansatz as the target density of Gradient flow}
\label{sec:Find_parameters_given_samples}
According to Ito's lemma \citep{platen2010numerical} the transition of $f$ to $\hat f$ is governed by the Fokker-Planck equation
\begin{flalign}
    \frac{\partial f}{\partial t} &=
     \nabla_{\bm x} \left[ \hat f\, \nabla_{\bm x} \big[f/\hat f\,\big]  \right] 
     \label{eq:FP_fXdf}
      \\
    &=- \nabla_{\bm x} \cdot \Big[ \nabla_{\bm x}\big[\log\big(\hat f\,\big)\big] f   \Big] + \nabla^2_{\bm x} \Big[f\Big].
    \label{eq:FP_expanded}
\end{flalign}
\begin{proposition}
The distribution function $f(t)$ governed by the Fokker-Planck Eq.~(\ref{eq:FP_fXdf}) converges to  $\hat f$ as $t \rightarrow \infty$. Furthermore, the cross entropy distance between $f$ and $\hat f$ monotonically decreases during this transition. 
\end{proposition}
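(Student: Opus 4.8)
The plan is to exhibit a Lyapunov functional for the Fokker--Planck flow of Eq.~(\ref{eq:FP_fXdf}) whose dissipation is manifestly of one sign, and to read off both assertions from a single computation. The natural candidate is the relative entropy, which I take here as the cross-entropy distance between $f$ and the fixed reference $\hat f$,
\begin{flalign}
\mathcal{H}(t) \;=\; \KL\!\big(f(t)\,\|\,\hat f\,\big) \;=\; \int f\,\log\!\big(f/\hat f\,\big)\, d\bm x ,
\end{flalign}
which is nonnegative and vanishes only when $f=\hat f$. First I would record that $\hat f$ is the expected stationary point: substituting $f=\hat f$ into the right-hand side of Eq.~(\ref{eq:FP_fXdf}) gives $\nabla_{\bm x}\!\big[\hat f\,\nabla_{\bm x}(1)\big]=0$, so the flow is at rest exactly at $\hat f$.

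Next I would differentiate $\mathcal{H}$ along the flow. Mass conservation, $\int \partial_t f\, d\bm x = 0$ (which holds because the right-hand side of Eq.~(\ref{eq:FP_fXdf}) is a divergence), kills the term coming from $\partial_t \log f$ and leaves $\tfrac{d}{dt}\mathcal{H}=\int \partial_t f\,\log(f/\hat f)\, d\bm x$. Substituting Eq.~(\ref{eq:FP_fXdf}), integrating by parts once, and discarding the boundary term yields
\begin{flalign}
\frac{d}{dt}\mathcal{H}(t) \;=\; -\int \hat f\,\nabla_{\bm x}\!\big(f/\hat f\,\big)\cdot \nabla_{\bm x}\log\!\big(f/\hat f\,\big)\, d\bm x \;=\; -\int f\,\big|\nabla_{\bm x}\log\!\big(f/\hat f\,\big)\big|^2\, d\bm x \;\le\; 0 ,
\end{flalign}
where the second equality uses $\nabla_{\bm x}\log(f/\hat f)=(\hat f/f)\,\nabla_{\bm x}(f/\hat f)$. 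The right-hand side is minus the relative Fisher information, which is nonnegative; this already proves the monotone decrease of the cross-entropy distance, i.e. the second assertion.

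Finally, for convergence I would note that $\mathcal{H}(t)$ is nonincreasing and bounded below by $0$, hence converges, and the dissipation identity forces the relative Fisher information to tend to zero; its vanishing requires $\nabla_{\bm x}\log(f/\hat f)=0$ almost everywhere, so $f/\hat f$ is constant, and normalization gives $f=\hat f$. To turn this into genuine convergence $f(t)\to\hat f$ I would invoke a log-Sobolev-type inequality $\mathcal{H}(t)\le \tfrac{1}{2C}\big(-\tfrac{d}{dt}\mathcal{H}(t)\big)$, valid when $\hat f$ satisfies a Bakry--\'Emery/convexity condition on the domain; Gr\"onwall then gives $\mathcal{H}(t)\le \mathcal{H}(0)\,e^{-2Ct}\to 0$, and the Csisz\'ar--Kullback--Pinsker inequality upgrades this to $\|f(t)-\hat f\|_{L^1}\to 0$. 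The main obstacle is precisely this last step: the monotonicity is a one-line integration by parts, whereas quantitative convergence requires either a spectral-gap/log-Sobolev hypothesis on $\hat f$ and the domain, or, in its absence, a LaSalle-type compactness argument, together with a justification that the boundary terms dropped in the integration by parts actually vanish (sufficient decay at infinity, or no-flux conditions on a bounded domain).
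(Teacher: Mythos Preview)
Your approach is essentially identical to the paper's: both compute $\frac{d}{dt}\int f\log(f/\hat f)\,d\bm x$ by substituting Eq.~(\ref{eq:FP_fXdf}), integrating by parts once (dropping the boundary term under a decay assumption), and using $\nabla_{\bm x}(f/\hat f)=(f/\hat f)\nabla_{\bm x}\log(f/\hat f)$ to arrive at $-\int f\,|\nabla_{\bm x}\log(f/\hat f)|^2\,d\bm x\le 0$. Your treatment is in fact more careful than the paper's on two points --- you make explicit the mass-conservation step that kills the $\partial_t\log f$ contribution, and you correctly flag that the convergence assertion $f(t)\to\hat f$ requires an additional ingredient (log-Sobolev/Bakry--\'Emery or a LaSalle argument), whereas the paper simply asserts convergence from monotonicity of the entropy.
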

\begin{proof}
    Let us multiply both sides of Eq.~(\ref{eq:FP_fXdf}) by $\log(f/\hat f)$ and take the integral with respect to $\bm x$ in order to obtain the evolution of the cross-entropy $S=\int f \log(f/\hat f) d \bm x$. It follows that 
    \begin{flalign}
        \frac{d S}{dt} &= \int \log(f/\hat f)\, \nabla_{\bm x} \left[ \hat f\, \nabla_{\bm x} [f/\hat f]  \right] d \bm x \nonumber 
        \\
        &= \int \nabla_{\bm x} \Big[ \hat f \log(f/\hat f) \nabla_{\bm x}[f/\hat f]  \Big] d \bm x - \int \hat f \,\nabla_{\bm x}[\log(f/\hat f)]  \cdot \nabla_{\bm x}[f/\hat f] d \bm x \nonumber \\
        &=\underbrace{\int \nabla_{\bm x} \Big[ \hat f \log(f/\hat f) \,\frac{f}{\hat f} \nabla_{\bm x}[\log(f/\hat f)]  \Big] d \bm x}_{=\,0} - \int \hat f\, \nabla_{\bm x}[\log(f/\hat f)]  \cdot \frac{f}{\hat f} \,\nabla_{\bm x}[\log(f/\hat f)] d \bm x \nonumber 
        \\ 
        &=- \sum_{i=1}^{\dim(\bm x)} \int f \Big(\nabla_{ x_i}[\log(f/\hat f)]\Big)^2 d \bm x \leq 0~.
    \end{flalign}
   Here, we use the regularity condition that 
   $f \log(f/\hat f) \nabla_{\bm{x}} \log(f/\hat f)\rightarrow 0$ as $|\bm x|\rightarrow \infty$. Therefore, given any initial condition for $f$ at $t=0$, the cross-entropy distance between $f$ and $\hat f$ following the Fokker-Planck in Eq.~(\ref{eq:FP_fXdf}) monotonically decreases until it reaches the steady-state with the trivial fixed point $f \rightarrow \hat f$ as $t\rightarrow \infty$. For details, see \citep{liu2017stein}.
\end{proof}

% Instead of seeking solutions of Eq.~(\ref{eq:FP_fXdf}), our approach focuses on working with appropriate empirical moments of this equation, which can be evaluated from the available samples. As will be demonstrated below, this method proves to be very effective for determining $\hat f$.

% With applications to high-dimensional problems in mind, instead of looking for solutions of Eq.~(\ref{eq:FP_fXdf}) we choose to work
Instead of seeking solutions of Eq.~(\ref{eq:FP_fXdf}), our approach focuses on working with appropriate empirical moments of this equation, which can be evaluated from the available samples. As will be demonstrated below, this approach lends itself to a very effective method for determining $\hat f$.
% this method proves to be very effective for determining $\hat f$.
\\ \ \\
Let us denote a vector of basis functions in $ \mathbb{R}^{\dim(\bm x)}$ by $\bm H(\bm x)$.
By multiplying both sides of Eq.~(\ref{eq:FP_expanded}) by $\bm H(\bm x)$ and integrating with respect to $\bm x$, we obtain the evolution equation for the  moments, also known as the relaxation rates, 
\begin{flalign}
    \frac{d}{d t}\Big[\int \bm H f d\bm x \Big] &= - \int \bm H \nabla_{\bm x} \cdot \Big[ \nabla_{\bm x}[ \log(\hat f) ] f   \Big] d\bm x + \int \bm H \nabla^2_{\bm x} \Big[f\Big] d\bm x~.
    %\\    &=   \int \frac{\partial}{\partial x_i} \Big[  \bm H  \Big] \frac{\partial \log(f)}{\partial x_i} f dx     +    \int \frac{\partial^2}{\partial x_j x_j} \Big[ \bm H\Big] f dx
\end{flalign}
Assuming that the underlying density $f$ is integrable in $\mathbb{R}^{\mathrm{dim}(\bm x)}$ and $f \bm H \rightarrow \bm 0$ as $\bm x \rightarrow \infty$, which is implied by the existence of moments, we use integration by parts to obtain
\begin{flalign}
    \frac{d}{d t}\Big[\int \bm H f d\bm x \Big]  = \int \nabla_{\bm x} [\bm H] \cdot  \nabla_{\bm x} [\log (\hat f)] f  d \bm x
    + 
 \int \nabla^2_{\bm x} [\bm H ] f d \bm x~.
    \label{eq:loss_evol_mom}
\end{flalign}
Given samples of $f$, one can compute the relaxation rates of moments represented by Eq.~(\ref{eq:loss_evol_mom}) as a measure of the difference between  $\hat f$ and  $f$. These relaxation rates can be used as the gradient in the search for parameters of a given ansatz, i.e. 
\begin{flalign}
    \bm g(t)=\frac{d}{dt}\Big\langle {\bm H({\bm X}(t))} \Big\rangle  =&\, \Big\langle \nabla_{\bm x} [\bm H(\bm X(t))] \cdot \nabla_{\bm x} [\log (\hat f(\bm X(t)))]  \Big\rangle 
    + 
    \Big\langle \nabla^2_{\bm x} [\bm H (\bm X(t))] \Big \rangle~.
    %\\ &+ \int \bm H f dx  - \langle \bm H(X) \rangle
    \label{eq:gradient_general}
\end{flalign}
%leading to the cost function
%\begin{flalign}
%    C = ||\bm g||_1~.
%\end{flalign}
%In order to cope with the curse of dimensionality in the estimation of moments for the guessed $f$, we suggest computing moments using samples of random variable $Y$ with known distribution,  e.g. normally distribution $Y\sim \mathcal{N}( 0,1)$.  In particular, we use the relation
%\begin{flalign}
%    \int \bm H f dx &= \int \bm H \frac{f}{\mathcal{N}} \mathcal{N} dx
%    \\
%    & = \langle  \bm H(Y) f(Y)/\mathcal{N}(Y)  \rangle
%\end{flalign}
In the above, $\langle \phi(\bm X) \rangle$ denotes the unbiased empirical measure for the expectation of $\phi(\bm X)$ which is computed using samples of $\bm X_i, \text{for}\ i=1,...,N$ via  $\langle \phi(\bm X)\rangle =\frac{1}{N} \sum_{i=1}^N \phi(\bm X_i)$.

%Here we note that the Hessian for this optimization can be obtained \citep{liu2017stein} using the samples of the unknown distribution by taking the derivative of this gradient with respect to the parameters $\bm \theta$ of the ansatz $\hat f$, namely
%\begin{equation}
%\bm L(t) &= \nabla_{\bm \theta} [ \bm g ]
%    =
% \Big\langle \nabla_{\bm x}[ \bm H(\bm X(t))] \cdot  \nabla_{\bm \theta}\Big[ \nabla_{\bm x}[ \log (\hat f(\bm X(t)))] \Big]  \Big\rangle~.
% \label{eq:Hessian_general_form}
%\end{equation}

In what follows we develop an approach that uses $g(t)$ as the gradient of an optimization problem to bring computational benefits to the solution of the maximum entropy problem.

\section{Maximum Entropy Distribution as an ansatz for the gradient flow}
\label{sec:MED_as_ansatz_for_gradient_flow}
In this work, we use the maximum entropy distribution function as our parameterized ansatz for $\hat f$, i.e.
\begin{flalign}
    \hat f(\bm x) &= Z^{-1}\exp\big( \bm \lambda \cdot \bm H(\bm x) \big)
    \label{eq:ansatz_f}
    % \\
    % \text{where}\ \ Z&=\int \exp( \bm \lambda \cdot \bm H  ) d \bm x.
\end{flalign}
where $Z =\int \exp( \bm \lambda \cdot \bm H(\bm x)  ) d \bm x$ is the normalization constant. The motivation for choosing this family of distributions is the fact that this is the least-biased distribution for the moment problem, provided the given moments are matched.
\begin{definition}\label{def:momentprobelm}\emph{Moment problem}\vspace{5pt}\\
\textit{The problem of finding a distribution function $f(\bm x)$  given its moments $\int \bm H(\bm x) f(\bm x) d \bm x = \bm \mu$ for the vector of basis functions $\bm H(\bm x)$ will be referred to as the moment problem.
}
\end{definition}

In particular, the density in Eq.~(\ref{eq:ansatz_f}) is the extremum of the loss functional that minimizes the Shannon entropy with constraints on moments $\bm \mu$ using the method of Lagrange multipliers, i.e.
%%%%%%%%%%%%%%%%%%%%%%%
\begin{flalign}
\hat f(\bm x) =& \,\,\underset{\mathcal{F} \in \mathcal{K}}{\arg\min}\,\,\mathcal{C} [\mathcal{F}(\bm x)]\\
\text{where}\ \ \ \  
\mathcal{C} [\mathcal{F}(\bm x)]:=&\int \mathcal{F}(\bm x) \log(\mathcal{F}(\bm x)) d \bm x 
- \sum_{i=1}^{N_b} \lambda_i  \left(\int H_i(\bm x) \mathcal{F}(\bm x) d \bm x-\mu_i(\bm x)\right)~.
\label{eq:MED_cost}
\end{flalign}
%%%%%%%%%%%%%%%%%%%%
 Here $\mathcal{K}$ denotes the space of probability density functions with measurable moments; see \citep{kapur1989maximum} and Appendix~\ref{sec:app_MED} for more details. In this paper, we denote the number of considered basis functions by $N_b$,  while $N_m$ denotes the highest order of these basis functions.
For instance, in the case of traditional one-dimensional random variable where polynomial basis functions are deployed, i.e. $\bm H=\big[x,x^2,...,x^{N_m}\big]$, we have $N_m=N_b$.

Here, we use the following definition for the growth rate of a basis function.

\begin{definition}\label{def:basis_growth_rate}\emph{Growth rate  of n-th order}\vspace{5pt}\\
\textit{A function $\psi(x)$ has the growth-rate of n-th order if $|\psi(x)|\leq Cx^n$ for all $x \geq x_0$ where $C \in \mathbb{R}^{+}$ and $x_0 \in \mathbb{R}$. This is often denoted by $\psi(x) = \mathcal{O}(x^n)$.
} 
\end{definition}

We note that the moment problem for the MED is reduced to the following optimization problem by substituting the extremum \ref{eq:ansatz_f} back in the objective functional \eqref{eq:MED_cost}, see e.g. \cite{abramov2006practical}.
\begin{definition}\label{def:opt_standardMED}\emph{Standard dual optimization problem of Maximum entropy distribution function}\vspace{5pt}\\
Given moments $\bm \mu$, the Lagrange multipliers $\bm \lambda$ of the maximum entropy distribution are the solution to the following unconstrained optimization problem
\begin{flalign}
    \bm \lambda = \argmin_{\hat{\bm \lambda} \in \mathbb{R}^N_b} \bigg\{ 
     \log\left[\int \exp({\bm{ \hat \lambda}  \cdot \bm H(\bm x)}) d\bm x\right] - \hat{\bm \lambda} \cdot \bm \mu \bigg\}~.
\end{flalign}
\end{definition}
Clearly, the standard optimization problem for finding Lagrange multipliers is nonlinear and requires deploying iterative methods, such as the Newton-Raphson method detailed in \ref{sec:app_MED}. 
\\ \ \\
Instead, using the Gradient flow, we find an alternative optimization problem for finding Lagrange multipliers which is linear and simple to compute from given samples.\\ Substituting Eq.~(\ref{eq:ansatz_f}) for $\hat f$ in  Eq.~(\ref{eq:gradient_general}) results in the relaxation rate
\begin{flalign}
    \bm g(t) =& \sum_{i=1}^{\dim(\bm x)} \Big\langle \nabla_{x_i} \big[\bm H\big(\bm X(t)\big)\big] \otimes  \nabla_{ x_i} \big[\bm H\big(\bm X(t)\big)\big]  \Big\rangle \bm \lambda 
    + 
    \sum_{i=1}^{\dim(\bm x)} \Big\langle \nabla^2_{x_i} \big[\bm H \big(\bm X(t)\big)\big] \Big \rangle~,\label{eq:gradient_ME_gradlogf}
\end{flalign}
%%%%%%%%%%%%%%%%%%%%%55
where $\otimes$ indicates the outer product. Let us define the matrix $\bm L^\mathrm{ME}$ as
\begin{flalign}
\bm L^\mathrm{ME}(t) := \sum_{i=1}^{\dim(\bm x)} \Big\langle \nabla_{x_i} \big[ \bm H\big(\bm X(t)\big)\big] \otimes  \nabla_{x_i} \big[ \bm H\big(\bm X(t)\big)\big] \Big\rangle~.
\label{eq:Hessian_ME_gradlogf}
\end{flalign}
%%%%%%%%%%%%%%%%%%%%
%Here we define the corresponding optimization problem of our proposed method of finding Lagrange multipliers.
\begin{definition}\label{opt:ME_gradientflow}
\emph{Optimization problem of Maximum entropy distribution function via Gradient flow}\vspace{5pt}\\
Given samples $\bm X$ of the target distribution $f$, Lagrange multipliers $\bm \lambda$ of maximum entropy distribution estimate $\hat{f}$ is the solution to the following unconstrained optimization problem
\begin{flalign}
    \bm \lambda = \argmin_{\hat{\bm \lambda} \in \mathbb{R}^N_b}
    \bigg\{
\bm L^\mathrm{ME}(t) :  \frac{\hat{\bm \lambda} \otimes \hat{\bm \lambda}}{2}  
    + 
    \sum_{i=1}^{\dim(\bm x)} \Big\langle \nabla^2_{x_i} \big[\bm H \big(\bm X(t)\big)\big] \Big \rangle  \cdot  \hat{\bm \lambda}
    \bigg\}~,
\end{flalign}
where $(:)$ denotes the Frobenius inner (or double dot) product, i.e. $\bm A:\bm B=\sum_{i,j} A_{ij} B_{ij}$ for given two-dimensional tensors $\bm A$ and $\bm B$.
\end{definition}

We note that the matrix $\bm L^\mathrm{ME}$ is the Hessian of the optimization problem with gradient given by Eq.~(\ref{eq:gradient_ME_gradlogf}) which is positive definite, making the underlying optimization problem convex.

\begin{proposition}
The Hessian matrix $\bm L^\mathrm{ME}$ is symmetric positive definite. As a result, the optimization problem with gradient given by Eq.~(\ref{eq:gradient_ME_gradlogf}) and Hessian matrix given by  Eq.~(\ref{eq:Hessian_ME_gradlogf}) is strictly convex.
\end{proposition}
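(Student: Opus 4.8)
The goal is to show that the matrix
\begin{flalign}
\bm L^\mathrm{ME}(t) = \sum_{i=1}^{\dim(\bm x)} \Big\langle \nabla_{x_i} \big[ \bm H\big(\bm X(t)\big)\big] \otimes  \nabla_{x_i} \big[ \bm H\big(\bm X(t)\big)\big] \Big\rangle \nonumber
\end{flalign}
is symmetric positive definite, and then to conclude strict convexity of the associated optimization problem. The plan is to exploit the fact that each summand is an empirical average of outer products of a single vector with itself, which is the structure of a Gram-type matrix. First I would note symmetry immediately: for any fixed coordinate $i$ and sample $\bm X_k$, the outer product $\nabla_{x_i} \bm H \otimes \nabla_{x_i} \bm H$ is symmetric by construction (the $(j,l)$ entry equals $\partial_{x_i} H_j \,\partial_{x_i} H_l$, which is symmetric in $j,l$); averaging over samples and summing over $i$ preserves symmetry.

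For positive definiteness, I would take an arbitrary test vector $\bm v \in \mathbb{R}^{N_b}$ and compute the quadratic form $\bm v^\top \bm L^\mathrm{ME} \bm v$. Using the definition of the empirical expectation $\langle \phi \rangle = \frac{1}{N}\sum_{k=1}^N \phi(\bm X_k)$ and expanding the outer product, this becomes
\begin{flalign}
\bm v^\top \bm L^\mathrm{ME} \bm v = \frac{1}{N}\sum_{k=1}^N \sum_{i=1}^{\dim(\bm x)} \Big( \bm v \cdot \nabla_{x_i}\bm H(\bm X_k) \Big)^2 \geq 0, \nonumber
\end{flalign}
which establishes positive \emph{semi}-definiteness, since it is a sum of squares. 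The strict inequality requires an additional nondegeneracy argument: $\bm v^\top \bm L^\mathrm{ME} \bm v = 0$ forces $\bm v \cdot \nabla_{x_i}\bm H(\bm X_k) = 0$ for every sample $\bm X_k$ and every coordinate $i$, i.e. the directional derivative of the scalar field $\bm v \cdot \bm H$ vanishes at all sample points in all directions. I would argue that, provided the basis functions $\bm H$ are chosen so that their gradients $\{\nabla \bm H\}$ span $\mathbb{R}^{N_b}$ across the collection of sample points (equivalently, the $N_b$ component functions are functionally independent in a neighborhood of the data), this can only happen for $\bm v = \bm 0$. This is the linear-independence condition one always needs for a Gram matrix to be strictly positive definite.

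The main obstacle is precisely pinning down this nondegeneracy hypothesis cleanly. The positive \emph{semi}-definiteness is automatic from the sum-of-squares structure, but strict positivity genuinely depends on the sample configuration and on the chosen basis: if there are fewer distinct sample points than basis functions, or if the basis gradients happen to be linearly dependent along the data, the matrix can be singular. I would therefore state the result under the standing assumption that the gradients of the basis functions are linearly independent over the sample set (which the paper's symbolic search for well-conditioned bases is implicitly designed to ensure), and then the strict inequality follows. Finally, since $\bm L^\mathrm{ME}$ is exactly the Hessian of the quadratic objective in Definition~\ref{opt:ME_gradientflow} with respect to $\hat{\bm \lambda}$, a symmetric positive definite Hessian that is constant in $\hat{\bm \lambda}$ immediately yields strict convexity of that objective, completing the proof.
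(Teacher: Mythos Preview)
Your argument is essentially the same as the paper's: note symmetry of each outer-product summand, then compute the quadratic form $\bm v^\top \bm L^{\mathrm{ME}}\bm v$ and rewrite it as a sum of squares $\sum_i \big\langle (\bm v^\top \nabla_{x_i}\bm H)^2 \big\rangle$. The paper stops there and simply writes ``$>0$'' without further comment; your additional discussion of the nondegeneracy hypothesis needed to upgrade $\geq 0$ to strict inequality is a valid refinement that the paper leaves implicit.
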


\begin{proof}
Clearly, the Hessian matrix defined by Eq.~(\ref{eq:Hessian_ME_gradlogf}) is symmetric, i.e. $L_{i,j}^\mathrm{ME}=L_{j,i}^\mathrm{ME}\ \forall i,j=1,...,N_b$. We further note that this matrix is positive definite, i.e. for any non-zero vector $\bm w\in \mathbb{R}^{N_b}$ we can write
\begin{flalign}
    \bm w^T \bm L^\mathrm{ME}(t) \bm w &= \sum_{i=1}^{\dim(\bm x)}
    \Big\langle \bm w^T  \nabla_{ x_i} \big[\bm H\big(\bm X(t)\big)\big]\  \nabla_{ x_i} \big[\bm H\big(\bm X(t)\big) \big]^T \bm w    \Big\rangle 
    \\
    &= \sum_{i=1}^{\dim(\bm x)} \Big\langle \Big( \bm w^T  \nabla_{x_i}\big[\bm H\big(\bm X(t)\big)\big] \Big)^2    \Big\rangle > 0~.
\end{flalign}
Given the Hessian is symmetric positive definite, we conclude that the underlying optimization problem is convex \citep{chong2013introduction}.
\end{proof}
\vspace{-0.2cm}
When the matrix $\bm L^{\mathrm{ME}}$ is well-conditioned, we can directly compute the Lagrange multipliers using samples, i.e. a linear solution to the optimization problem in Def.~\ref{opt:ME_gradientflow}.
This can be achieved by solving Eq.~(\ref{eq:gradient_ME_gradlogf}) for the Lagrange multipliers
\begin{flalign}
\bm L^\mathrm{ME}(t) \bm \lambda  = 
\bm g(t) -\sum_{i=1}^{\dim(\bm x)} \Big\langle \nabla^2_{x_i} \big[ \bm H \big(\bm X(t)\big)\big] \Big \rangle~
\end{flalign}
for a given relaxation rate $\bm g$. 

We proceed by noting that a convenient way for determining the parameters of $\hat f$ is to set ${\hat f}=f(t=0)$ in the above formulation, or in other words, require that the given samples are also samples of $\hat f$ as given. This corresponds to the steady solution of Eq.~(\ref{eq:gradient_ME_gradlogf}), namely $\bm g\rightarrow \bm 0$, which implies the remarkably simple result
\begin{flalign}
    \bm \lambda =& - (\bm L^\mathrm{ME})^{-1} \left( \sum_{i=1}^{\dim(\bm x)} \Big\langle \nabla^2_{x_i} \big[ \bm H \big(\bm X(t=0)\big)\big] \Big \rangle \right)~,
    \label{eq:lambda_ME_gradlogf}
\end{flalign}
which implies a closed-form solution for the Lagrange multipliers through the above linear problem.

While Eq.~(\ref{eq:lambda_ME_gradlogf}) analytically recovers the Lagrange multipliers $\bm \lambda$ directly from samples of $\bm X$, it still requires inverting the matrix $\bm L^\mathrm{ME}$ which may be ill-conditioned \citep{abramov2010multidimensional,alldredge2014adaptive}. This means that the resulting Lagrange multipliers may become sensitive to  noise in the samples and the choice of the basis functions.
In order to cope with this issue, we propose computing $\bm \lambda$ as outlined below. 

\paragraph{Orthonormalizing the basis functions.} We construct an orthonormal basis function with respect to $\bm X \sim f$ using the modified Gram-Schmidt algorithm as described in Algorithm~\ref{alg:modifiedGS}. We deploy the orthonormal basis functions from the Gram-Schmidt procedure to construct $\nabla_{\bm x}[ \bm H] ^\mathrm{\perp}$, i.e.  $\nabla_{\bm x}[ \bm H]$ is the input to Algorithm~\ref{alg:modifiedGS}, and by integration we obtain $\bm H^{\perp}$. This leads to a well-conditioned matrix $\bm L^\mathrm{ME}$, since the resulting matrix should be close to identity $\bm L^\mathrm{ME}\approx\bm I$ with condition number $\mathrm{cond}\big(\bm L^\mathrm{ME}\big)\approx 1$ subject to round-off error. We note that the cost of this algorithm is quadratic with the number of basis functions and linear with the number of samples.

\begin{algorithm}[H]
 \caption{Modified Gram-Schmidt: Given a vector of basis functions $\bm \phi$, this algorithm constructs an orthonormal basis functions $\bm \phi^{\perp}$ with respect to  $f$  such that  $\langle \bm \phi^{\perp}(\bm X) \otimes \bm \phi^{\perp}(\bm X)  \rangle \approx \bm I$ using the modified Gram-Schmidt procedure \citep{giraud2002round,abramov2010multidimensional}. }
\SetAlgoLined
\KwInput{$\bm \phi$}
 Initialize $\bm \phi ^{\perp} \leftarrow \bm \phi$\;
 \For{$i=1,...,\dim(\bm \phi)$}{
    $\phi_i^{\perp} = \phi_i^{\perp}/\sqrt{\langle (\phi_i^{\perp}(\bm X))^2 \rangle}$\;
    \For{$j=i+1,...,\dim(\bm \phi)$}{
       $\phi_j^{\perp} \leftarrow \phi_j^{\perp} - \langle \phi_i^{\perp} (\bm X) \phi_j^{\perp} (\bm X) \rangle \phi_i^{\perp}$\;
    }
 }
 \textbf{Return} $\bm \phi^{\perp}$
 \label{alg:modifiedGS}
\end{algorithm}

\subsection{Comparing the proposed formulation to standard Maximum Entropy Distribution}
\label{sec:comp_newMED_standrdMED}
 Here we point out several advantages of using the proposed loss function compared to the standard maximum entropy closure. 

\begin{itemize}
    \item 
 \textbf{A closed-form solution.} By setting the relaxation rate of the moments  to zero, the Lagrange multipliers can be computed directly from samples $\bm X\sim f$, i.e. by solving the system \ref{eq:lambda_ME_gradlogf}, without the need for the line-search associated with the Newton method of solving the optimization problem of standard MED, i.e. Def.~\ref{def:opt_standardMED}. This is a significant improvement compared to standard MED, where Lagrange multipliers are estimated iteratively --- see Eq.~(\ref{eq:lambda_update_MED}) and \mbox{Algorithm \ref{alg:Newton_method_MED} in Appendix~\ref{sec:app_MED}.}
    
\item \textbf{Avoiding the curse of dimensionality in integration.} In the proposed method, the computational complexity associated with the integration only depends on the number of samples, and not on the dimension of the probability space.
The proposed  method takes full advantage of having access to the samples of the unknown distribution function. This is in contrast to the standard Newton-Raphson method of solving optimization problem \ref{def:opt_standardMED} where samples of the initial guessed MED corresponding to the initial guessed $\bm \lambda$ is not available and one runs to the curse of dimensionality in computation of gradient and Hessian.
\\ \ \\
In particular, we compute the orthonormal basis function, gradient, and Hessian using the samples of $\bm X$. This use of the Monte Carlo integration method avoids the curse of high dimensionality associated with the conventional method for computing Lagrange multipliers. By deploying the Law of Large Numbers (LLN) in computing integrals, the proposed method benefits from the well-known result that the error of Monte Carlo integration is independent of dimension. In particular, given $N$ independent, identically distributed $d$-dimensional random variables $\bm X_1,...,\bm X_N$ with probability density $f(\bm x)$, where $\bm X\sim f$, the variance of the empirical estimator of a moment $\phi(\bm x)$ of $f$, i.e. $\mathbb E^\Delta [\phi(\bm X)] = {\sum_{i=1}^N \phi(\bm X_i)}/{N}$, is
%\begin{flalign}
%    \mathbb E[\phi(\bm X)] = \int \phi(\bm x) f(\bm x) d \bm x
%\end{flalign}
%can be estimated using the LLN via
%\begin{flalign}
%\mathbb E[\phi(\bm X)] \approx \mathbb E^\Delta [\phi(\bm X)] = \frac{\sum_{i=1}^N \phi(\bm X_i)}{N}~.
%\end{flalign}
%It can be shown that the variance of this estimate is 
\begin{flalign}
    \mathrm{Var}(\mathbb E^\Delta [\phi(\bm X)]) = \frac{\mathrm{Var}(\phi(\bm x))}{N}
\end{flalign}
which is independent of the dimension $d=\mathrm{dim}(\bm x)$. 
Therefore, for a required ratio of variance in prediction and variance of the underlying random variable, i.e. $\mathrm{Var}(\mathbb E^\Delta [\phi(\bm X)]) / \mathrm{Var}(\phi(\bm x))$, the cost of integration is $\mathcal{O}(s N)$. The factor $s$ denotes the cost of computing $\phi(\bm X)$ for one sample.
%We note that the root of mean-squared-error, i.e. $\sqrt{\mathrm{Var}(\mathbb E^\Delta [\phi(\bm X)])}$, is considered as the numerical error (statistical noise) $\epsilon$ associated with the Monte Carlo integration in statistics.}
%{\bf I wonder if the above (equation 19-21 and the definitions they require) are  a bit sophomoric. I think the point here was to point out that this is MC integration rather than quadratures. Do you think people reviewing this paper do not know (19)-(21) and the sentences after?}
%{\color{red}MS: I'm not sure if they were following what we wrote before. That's why I repeated this well-known result. We can reduce the content to make it more concise.} {\bf I think a bit more consice would better. This is very basic material. The key is that no quadrature is required.}
\\ \ \\This is a considerable advantage compared to the standard approach of finding the Lagrange multipliers of MED where the cost associated with integration is of order $\mathcal{O}(N^d)$ where $N$ is the number of discretization points in each dimension. We remind the reader that in the standard Newton-Raphson method of finding the Lagrange multipliers, one updates the guessed $\bm \lambda$ by 
\begin{flalign}
    \bm \lambda \leftarrow \bm \lambda - \bm L^{-1}(\bm \lambda) \bm g(\bm \lambda)
    \label{eq:update_lambda_standard_MED}
\end{flalign}
where the gradient $\bm g$ and Hessian $\bm L$
%\begin{flalign}
%\bm g^\mathrm{MED} &= \bm \mu - \frac{1}{Z}\int \bm H  \exp\left(\bm \lambda \cdot \bm H  \right) d \bm x
%\\
% \text{and} \ \ \
%\bm L^\mathrm{MED}&= - \frac{1}{Z}\int \bm H \otimes \bm H \exp\left(\bm \lambda \cdot \bm H  \right) d \bm x~
%\end{flalign}
need to be computed during  each iteration, see section \ref{sec:app_MED} and reference therein for details. Since samples of the guessed distribution, i.e. guessed $\bm \lambda$, are not available, computation of the gradient and Hessian can become expensive. Specifically,  one has to either generate samples of the guess distribution in each intermediate step of the Newton-Raphson method which adds complexity, or deploy a deterministic integration method with cost $\mathcal{O}(N^\mathrm{d})$  where $N$ is the number of discretization points in each dimension and $d=\dim(\bm x)$.
\\ \ \\
We further point out that since in both the proposed solution and the standard iterative method a matrix needs to be inverted, the cost in both algorithms scales $\mathcal{O}(N_b^3)$ with number of basis functions (moments) regardless of the cost associated with integration. For example, in case of using monomials up to 2nd order in all dimensions as basis functions i.e. $\bm H = [x_1,...,x_d, x_1^2, x_1 x_2, ..., x_d^2] $, there are $|\bm H|=d+d(d+1)/2$ unique bases,  leading to complexity $\mathcal{O}\left( (d+d(d+1)/2)^3 \right)$ for  solving the linear system \ref{eq:lambda_ME_gradlogf}.
% \newpage
\item \textbf{Relaxed existence requirements.} Since the proposed approach directly finds the Lagrange multipliers for the realizable moment problem linearly using Eq.~(\ref{eq:lambda_ME_gradlogf}), it avoids the problem of possible non-realizable distribution estimate with intermediate Lagrange multipliers that is present in the iterative methods.
This is another advantage compared to the standard MED optimization problem, Eq.~(\ref{def:opt_standardMED}), where the line search Eq.~(\ref{eq:update_lambda_standard_MED}) may fail as the distribution associated with the intermediate $\bm \lambda$ may not exist (not integrable). This is a common problem when finding Lagrange multipliers for the moment problem close to the limit of realizability when the condition number of the Hessian becomes large and the iterative Newton-Raphson method fails, \mbox{e.g. see \citet{alldredge2014adaptive}.}
% \item   \sout{ \textbf{Reducing the condition number:} For the case where $\bm H$ is a vector of polynomial basis functions, we expect a smaller condition number  compared to the standard moment problem. This is because of the order reduction in the moments of the Hessian where moments of $\nabla_{\bm x} [H_i] \nabla_{\bm x} [ H_j]^T$ are computed rather than $ H_i  H_j$.}
  \end{itemize}

\section{Symbolic-Based Maximum Entropy Distribution}
\label{sec:symbolic_search_basis_function}
In the standard moment problem it is common to consider polynomials for the moment functions in 
 $\bm H$, i.e. $\bm H = \left[x, x^2, \hdots\right]$, even though other basis functions may better represent the unknown  distribution. Additionally, such polynomial basis functions are notorious for resulting in ill-conditioned solution processes. For these reasons, we introduce a symbolic regression approach to introduce some diversity and ultimately optimize over our use of basis functions. As we will see in the next section, adding the symbolic search to our MED description improves the accuracy, convergence, and robustness of the density recovery problem.

Before diving into the proposed method, we first briefly review the general task of symbolic regression. 
\begin{definition}\label{def:SR}\emph{Symbolic Regression (SR) problem}\vspace{5pt}\\
\textit{Given a metric $\mathcal{L}
$ and a dataset $\mathcal{D} = \{\bm x_i, y_i\}_{i=1}^N$ consisting of $N$  independent identically distributed  (i.i.d.) paired samples, where $\bm x_i \in \mathbb{R}^{\dim(\bm x)}$ and 
$y_i \in \mathbb{R}$, 
%$\bm y_i \in \mathbb{R}^{\dim(\bm y)}$
the SR problem searches in the space of functions $\mathcal{S}$ defined by a set of given mathematical functions (e.g., $\cos$, $\sin$, $\exp$, $\ln$) and arithmetic operations (e.g., $+$, $-$, $\times$, $\div$), for a function $\psi^*(\bm x)$ which minimizes $\sum_{i=1}^N\mathcal{L}\big( y_i, \psi(\bm x_i)\big)$ where $\psi \in \mathcal{S}$.% {\color{red} not sure}.
%for the function $\bm f^*(\bm x)$ given by 
}
%defined by a set of given arithmetic operations (e.g. $+$, $-$, $\times$, $\div$) and mathematical functions (e.g. $\cos$, $\sin$, $\exp$, $\log$),  along with the following optimization problem:
% \begin{align}
% \bm f^{\star}(\bm x) = \underset{\,\bm f\in\mathcal{S}}{\arg\min} \, \left|\left| \mathcal{L}\big(\bm f(\bm x), \bm y \big) \right|\right|~.
% \end{align}
%where $f^{\star}$ is the optimal analytical (or symbolic) function.
\end{definition}

In order to deploy the SR method for the density recovery, we need to restrict the space of functions $\mathcal{S}$ to those which satisfy non-negativity, normalization and  existence of  moments with respect to  the vector of linearly independent (polynomial) basis functions $\bm R$. The space of such distributions can be defined as

\begin{eqnarray}
\label{eq:set-admit}
\mathcal{S}_{f|\bm R}:=\left\{
f(\bm x)\in \mathcal{S}\, \bigg |\,
f(\bm x)\ge 0\ \forall \bm x\in \mathbb{R}^{\dim(\bm x)}, \int_{\mathbb{R}^{\dim(\bm x)}} f(\bm x)\ d\bm x = 1, \int_{\mathbb{R}^{\dim({\bm x})}} \bm R(\bm x)f(\bm x)\ d\bm  x<+\infty \right\}~.
\end{eqnarray}

%As our density estimation problem has a different problem setup than the one assumed in Def. \ref{def:SR}, we propose a reformulation that aligns with our objective of reconstructing the probability density function using a set of given data points. Namely, we observe samples $\bm X_1, \hdots , \bm X_N$ where $\bm X_i \in \mathbb{R}^d$ and we want to recover the underlying probability density function $\hat{f}_{\bm X}(\bm x)$ generating our samples. In general, we can express $\hat{f}_{\bm X}(\bm x)$ as follows: 
In order to ensure non-negativity, motivated by the MED formulation, we consider $\hat f$ to be exponential, i.e.
\begin{align}
\hat{f}(\bm x) \propto \exp\big(\mathcal{G}(\bm x) \big) \qquad \Longleftrightarrow \qquad \log\left(\hat{f}(\bm x)\right) \propto \mathcal{G}(\bm x)~,
\end{align} 
where $\mathcal{G}(\bm x)$ is an analytical (or symbolic) function of $\boldsymbol{x} =  \left[x_1, x_2, \hdots, x_{\dim({\bm x})}\right]$. While the non-negativity is guaranteed, existence of moments needs to be verified when a test function for $\mathcal{G}(\bm x)$ is considered. As our focus in this paper is on the maximum entropy distribution function given by Eq.~(\ref{eq:ansatz_f}), we consider $\mathcal{G}(\bm x)$ to have the form
\begin{align}
\mathcal{G}(\bm x) = \bm \lambda \cdot \bm H(\bm x) = \sum_{i=1}^{N_b} \lambda_iH_i(\bm x)~.
\end{align}
Now we proceed to provide a modified formulation for SR tailored to our MED problem.

\begin{definition}\label{def:SR-MED}\emph{Symbolic Regression for the Maximum Entropy Distribution (SR-MED) problem}\vspace{5pt}\\
\textit{
Given a measure of difference between distributions $\mathcal{L}
$ (e.g. KL Divergence) and a dataset $\mathcal{D} = \{\bm X_i\}_{i=1}^N$ consisting of $N$ i.i.d. samples, where $\bm X_i \in \mathbb{R}^{\dim({\bm{x}})}$, the SR-MED problem searches in the space $\mathcal{S}^{N_b}$ for $N_b$ basis functions subject to $\hat f\in \mathcal{S}_{f|\bm R}$ which minimizes $\mathcal{L}$.}
\begin{comment}
while minimizing the error $\mathcal{L}$ in moments of basis functions $\bm R$ with dimension $d_{\bm{R}}$ in norm $||\cdot||$. This optimization problem can be written as
\begin{align}
\bm H^{\star} = \underset{\,\,\bm H\in\mathcal{S}^{N_m}}{\arg\min}&\, \left|\left|\, \mathcal{L}\left(\bm \mu\big(\mathcal{D}\,;\bm R\big), \frac{1}{Z}\int \bm R(\bm x) \, e^{\bm \lambda \cdot \bm H(\bm x)}d\bm x\right) \right |\right |\\[\smallskipamount]
\text{s.t.}&\,\, \bm \lambda = \underset{\,\,\bm {\hat \lambda} \in \mathbb{R}^{N_m}}{\arg\min} \, \left|\left|\,\bm g\big(\mathcal{D}\,;\bm H, \bm {\hat \lambda}\big)\,\right|\right|,
\label{eq:Hstar_symbolic_basis}
\end{align}
where $\bm g$ is the gradient in Eq.~\ref{eq:gradient_ME_gradlogf}, $Z =\int e^{\bm \lambda \cdot \bm H(\bm x)} d \bm x$ is the normalization constant, $\bm \mu$ is the vector of moments  corresponding to vector of basis functions $\bm R$ computed using samples, and $\bm H^{\star}$ is the optimal set of $N_m$ analytical (or symbolic) basis functions.
\end{comment}
\end{definition}

\begin{wrapfigure}{r}{3.5cm}
   \centering
   \vspace{-0.5cm}
    \includegraphics[scale=0.8,trim={0 20 0 0},clip]{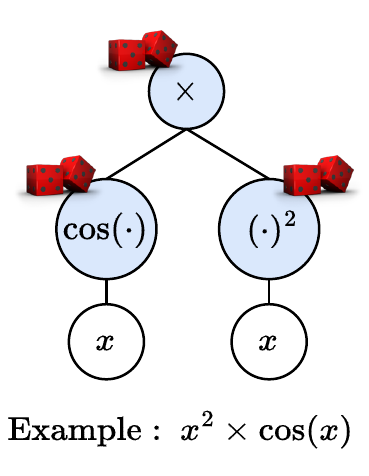}
    \vspace{-.3cm}
    \caption{Expression \\tree for $x^2 \times \cos(x)$.}
    \label{fig:expr_tree}
\end{wrapfigure}

% \newpage
Here, we deploy continuous functions consisting of \emph{binary} operators (e.g. $+$, $-$, $\times$, $\div$) or \emph{unary} functions (e.g. $\cos$, $\sin$, $\exp$, $\log$) to fill the space $S^{N_b}$.  As in most of the SR methods, we encode mathematical expressions using symbolic expression trees, a type of binary tree, where internal nodes contain operators or functions and terminal nodes (or leaves) contain input variables of constants. For instance, the expression tree in Figure \ref{fig:expr_tree} represents $x^2\cos(x)$. In this paper, we perform a Monte Carlo symbolic search in the space of smooth functions (by generating random expression trees) to find a vector of basis functions $\bm H$ that guarantees acceptable $\mathrm{cond}(\bm L^\mathrm{ME})$, by rejecting candidates that do not satisfy this condition. In our search, we do not consider test basis functions with odd growth rates which lead to non-realizable distributions. 

%in which internal nodes (including the root) encode mathematical \emph{binary} operators (e.g. $+$, $-$, $\times$, $\div$) or \emph{unary} functions (e.g. $\cos$, $\sin$, $\exp$, $\log$), and terminal nodes (or leaves) contain input variables of constants. 

\section{Multi-level density recovery}
\label{sec:multilevel_recursive_alg}
We further improve our proposed method by introducing a multi-level process that improves our prediction as the distribution becomes more detailed. The goal is to obtain a more generalized MED estimate with the form
\begin{flalign}
    \hat{f}(\bm x) &= \sum_{l=1}^{N_L} {m^{[l]}\,}\hat{f}^{[l]}(\bm x)
    \\ 
    \text{where}\ \ \ \ \hat{f}^{[l]}(\bm x) &= \frac{1}{Z^{[l]}}\exp \left(\bm \lambda^{[l]} \cdot \bm H^{[l]}(\bm x) \right),
\end{flalign}
$(.)^{[l]}$ denotes the level index, $Z^{[l]}$  is the normalization factor of density at level $l$, $N_L$ is the number of levels considered and $m^{[l]}$ indicates the portion of total mass that is covered by $\hat{f}^{[l]}$.
We note that this multi-level approach is recursive and can be described as follows:
\begin{itemize}
    \item \textbf{Step 1: Find MED estimate $\hat{f}^{[l]}$ at level $l$.} At level $l$, first we pick a basis function $\bm H^{[l]}$ by solving the SR-MED problem detailed in Def.~\ref{def:SR-MED}. Then, we orthonormalize the basis function with respect to the distribution of the samples using Gram–Schmidt's procedure as outlined in Algorithm~\ref{alg:modifiedGS}. 
    %Then, we use algorithm~\ref{alg:SDE_average_lambdas} to find the estimate $\hat{f}^{[l]}$ for the samples in the current level.
    \item \textbf{Step 2: Removing subset of  samples covered by $\hat f^{[l]}$.} Here, we attempt to find and remove a subset of samples ${\mathcal{D}}_{\mathrm{mask}}^{[l]}$  -- representing a fraction of the \emph{mass}, i.e. $m^{[l]}=|\mathcal D^{[l]}_\mathrm{mask}|/|\mathcal D|$ -- that can be estimated by our estimated $\hat{f}^{[l]}$ at this level. 
    To this end, we deploy acceptance/rejection with probability $\hat{f}^{[l]}/\hat{f}^\mathrm{hist}$ to find and remove ${\mathcal{D}}_{\mathrm{mask}}^{[l]}$ from the remaining samples $\mathcal{D}^{[l]}$.
     \item \textbf{Step 3: Repeat steps 1-2 for the next level $l+1$ until almost no samples are left.} Repeat steps 1-2 with the remaining uncovered samples (which constitutes the next level) until there are (almost) no uncovered samples. The resulting total distribution is a weighted sum of the estimates from each level.
\end{itemize}
      In Algorithm~\ref{alg:MultiLevelRecursion}, we detail a pseudocode for our devised multi-level process. As we will see in the next section, our proposed multi-level recursive mechanism improves overall performance, and elegantly describe details of multi-mode distributions.

\begin{algorithm}[H]
 \caption{Multi-level, symbolic and recursive algorithm for density recovery. Here, $\mathcal{D}^{[l]}$ denotes the set of samples at level $l$ and %$\supp(\mathcal{D}^{[l]})=\mathcal{B} (c, r)$ denotes a ball with the center at $\bm c=\mathbb{E}[\bm X\in \mathcal{D}^{[l]}]$ and radius $r=max(|\bm X-\bm c|)$. Furthermore, 
  $\bm u$ is a random variable that is uniformly distributed in $(0,1)$, i.e. $\bm u\sim \mathcal{U}([0,1])$.}
\SetAlgoLined
\newcommand\mycommfont[1]{\footnotesize\ttfamily\textcolor{blue}{#1}}
\SetCommentSty{mycommfont}
\KwInput{ $\mathcal{D}^{[1]} = \mathcal{D} = \{\bm X_i\}_{i=1}^N$, $N_{L}^{\mathrm{tot}}=N_L$}% and \textit{Piece-wise}=True/False}
 %\tcp*{Initialize the local dataset with all samples}
 \For{$l=1,...,N_L$}{
 Sample random basis functions $\bm H^{[l]}$ that satisfies Def.~\ref{def:SR-MED} starting from polynomials in level $l=1$\;
% \tcc{First, find the MED estimate $f^{[l]}$.}
 Compute $\hat{f}^{[l]}(\bm x)$ given $\mathcal{D}^{[l]}$ using Algorithm \ref{alg:modifiedGS}\;
 %\tcc{For each sample $\bm x \in \mathcal{D}_i$, generate $u \sim \mathcal{U}(0,1)$, if $u < \frac{\hat{f}_i(\bm x)}{\hat{f}_{\text{hist}(\mathcal{D}_i)}(\bm x)}$, then $\bm x \in \mathcal{D}^{\,\text{mask}}_i$ (i.e. $\bm x$ is masked)}
 $\mathcal{D}^{[l]}_\mathrm{mask}\leftarrow\{\mathcal{D}^{[l]}\,|\, \hat{f}^{[l]}(\bm X)/\hat{f}^{\mathrm{hist}}(\bm X) > \bm u \}$ where $\bm u\sim \mathcal{U}([0,1])$\;

% \If{Piece-wise}
% {
% $\hat{f}^{[l]}(\bm x) \leftarrow 
% \begin{cases}
%   \hat{f}^{[l]}(\bm x) & \text{if}\ \bm x \in \supp(\mathcal{D}^{[l]}_\mathrm{mask}) \\
%   0 & \text{otherwise}
% \end{cases}$
% \tcp*{Enforce piece-wise/normalize $f^{[l]}$ in $\supp(\mathcal D^{[l]}_\mathrm{mask})$}
% }
$m^{[l]}\leftarrow|\mathcal D^{[l]}_\mathrm{mask}|/|\mathcal D|$\;
\uIf{
$\sum_{j=1}^{l} |\mathcal{D}_\mathrm{mask}^{[j]}| \approx |\mathcal{D}|$}{
$\mathcal{D}_{\mathrm{mask}}^{[l]} \leftarrow \mathcal{D}^{[l]}$\tcp*{Mask all available samples}
% \If{Piece-wise}
% {
% $\hat{f}^{[l]}(\bm x) \leftarrow 
% \begin{cases}
%   \hat{f}^{[l]}(\bm x) & \text{if}\ \bm x \in \supp(\mathcal D^{[l]}_\mathrm{mask}) \\
%   0 & \text{otherwise}
% \end{cases}$
% \tcp*{Enforce piece-wise/normalize $f^{[l]}$ in $\supp(\mathcal D^{[l]}_\mathrm{mask})$}
% }
$m^{[l]}\leftarrow|\mathcal D^{[l]}_\mathrm{mask}|/|\mathcal D|$\;
$N_L^\mathrm{tot} \leftarrow l$\;
break\tcp*{Terminate the process}
}
\Else{
$\mathcal{D}^{[l+1]} \leftarrow \mathcal{D}^{[l]}\backslash \mathcal{D}_\mathrm{mask}^{[l]}$\tcp*{The uncovered samples are left for the next level}
}
 }
 \textbf{Return} $\hat{f}(\bm x) = \sum_{l=1}^{N_L^{\mathrm{tot}}} \hat{f}^{[l]} (\bm x) \ |\mathcal{D}_{\mathrm{mask}}^{[l]}|/|D|$\tcp*{$N_L^{\mathrm{tot}}$ is the total number of recursive calls}
 \label{alg:MultiLevelRecursion}
\end{algorithm}

\section{Algorithm for MESSY estimation}
\label{sec:MESSY_estimation}

The complete MESSY estimation algorithm is summarized in Algorithm~\ref{alg:MESSY-Full}. Within the iteration loop, following each application of the multi-level, symbolic, and recursive density recovery summarized in Algorithm~\ref{alg:MultiLevelRecursion}, we introduce a maximum-cross entropy distribution (MxED) correction step (see Appendix~\ref{sec:app_crossMED} for details) to reduce any bias in our prediction for  $\hat f$ from  the former. 

Finally, after completing the desired number of iterations, the algorithm returns the candidate density with the smallest KL Divergence given by 
\begin{align}
\mathrm{KL}\big(f\,||\,\hat f\big)&=\int f(\bm x) \log\left(\frac{f(\bm x)}{\hat f(\bm x)}\right) d\bm x \\
&= - \int f(\bm x) \log\big(\hat f(\bm x)\big) d\bm x + \int f(\bm x) \log\big(f(\bm x)\big) d\bm x\\
&\approx -\,
\big\langle \log\big(\hat f(\bm X)\big) \big\rangle\
+ 
\underbrace{\int f(\bm x) \log\big(f(\bm x)\big) d\bm x}_{\mathrm{constant\, with \, respect \,  to \, }\hat f}~.
\label{eq:KL_divergence}
\end{align}
%\underbrace{\frac{1}{N}\sum_{i=1}^N \log\big(\hat f(\bm X_i)\big)}_{\big\langle \log\big(\hat f(\bm X)\big) \big\rangle\,:=} +

In other words, we use $- \,\big\langle \log\big(\hat f(\bm X)\big) \big\rangle$ as our selection criterion. 
% In this algorithm, we use the KL Divergence between the estimate $\hat f$ against the histogram estimate $\hat f^\mathrm{hist}$, i.e. $\mathrm{KL}(\hat f^{\mathrm{hist}}|\hat f)=\int \hat  f^{\mathrm{hist}} \log(\hat f^{\mathrm{hist}}/\hat f) dx$, as our criterion for choosing the best fit. 

\begin{algorithm}[h]
\caption{Pseudocode of the proposed \texttt{MESSY} estimation method\label{alg:MESSY-Full}. Here, $\bm R$ is the vector of linearly independent (polynomial) basis functions used in the moment matching procedure of MxED. Here, for MESSY-S the number of basis functions $N_b$ is sampled uniformly from the sample space $\Omega_{N_b}$, e.g. here we use $\Omega_{N_b}=\{2,...,8\}$ unless mentioned otherwise.
%and  $\hat{\bm R}$ is the vector of the higher order basis functions whose moments are used as the selection criterion.
}
% \footnotesize
\SetAlgoLined
\newcommand\mycommfont[1]{\footnotesize\ttfamily\textcolor{blue}{#1}}
\SetCommentSty{mycommfont}
\KwInput{$\mathcal{D} = \{\bm X_i\}_{i=1}^N$, $\Omega_{N_b}$, $N_{m}$, $N_{\mathrm{iters}}$}
%\KwOutput{$\hat{f}_{\bm X}(\bm x)$}
Initialize $\hat f^{(i)}=0$ for $i=1,...,N_\mathrm{iters}$\;
\For{$i = 1$ \textbf{\textup{to}} $N_{\mathrm{iters}}$}{

\If{
 $i>1$
}{
Sample $N_b \sim  \mathcal{U}(\Omega_{N_b})$\;
}
Find $\hat f$ using multi-level, symbolic and recursive Algorithm \ref{alg:MultiLevelRecursion} for density recovery\;
Generate samples of $\bm Y\sim\hat f$\;
Apply boundary condition (bounded/unbounded) to $\hat f$\;
Correct $\hat f$ using MxED  (Algorithm~\ref{alg:Newton_method_MxED}) given samples $\bm Y$ as prior and $\mathbb{E}[\bm R(\bm X)]$ as target moments\; 
%{\color{red} add piecewise option}\;
$\hat f^{(i)}\leftarrow \hat f$\;

%\If{
% $-\,\langle \log(\hat f(\bm X)) \rangle <\mathrm{tol}$
%}{
%break\;
%}

%\bm H \leftarrow \textsc{GenerateRandomBases}(n_{\bm B})$\;
%\tcc{Create a set of $n_{\bm B}$ moments by randomly generating symbolic trees}%\;
%$\bm H^{\perp} \leftarrow \textsc{ModifiedGramSchmidt}(\bm H, \mathcal{D})$\tcp*{Orthogonalize the set $\bm H$ wrt}
%$\bm \lambda \leftarrow \textsc{SDE}(\bm H^{\perp}, \mathcal{D})$\tcp*{Find tfhe optimal multipliers using diffusion process}
%Update $\bm Y$
}
$\hat{f}^{\mathrm{MESSY-P}} = \hat{f}^{(1)}$\;

$\hat{f}^{\mathrm{MESSY-S}} = \argminnew_{\hat f \in \{\hat f^{(i)}\}_{i=1}^{N_\mathrm{iters}}} \left( {\mathrm{KL}(f\, ||\, \hat{f})}\right)$ \;
\textbf{Return} 
$\hat{f}^{\mathrm{MESSY-P}}$ and 
$\hat{f}^{\mathrm{MESSY-S}}$.
\end{algorithm}
% Consider
% \begin{flalign}
%   M_{N_{d}, N_\phi} &=
%   \begin{pmatrix}
%     \phi_1( m( X_1)) & .. &  \phi_{N_\phi}( m( X_1))\\
% \vdots & ... & \vdots \\
%   \phi_1(m( X_{N_d})) & .. &  \phi_{N_\phi}( m( X_{N_d}))
%   \end{pmatrix} 
%   \end{flalign}
%   Assume we restrict ourselves to a finite number of columns $N_\mathrm{terms}$ in estimating coefficients. The idea is to find the best set of index $I$ with size $|I|=N_\mathrm{terms}$ and fitted corresponding coefficients $a_i,b_i,c_i,d_i$ by
%   \begin{flalign}
%   A_j &= \sum_{i\in I} a_i M_{j,i}, \quad 
%   B_j = \sum_{i \in I} b_i M_{j,i}, \quad
%   C_j = \sum_{i \in I} c_i M_{j,i}, \quad
%   D_j = \sum_{i \in I} d_i M_{j,i}, \quad \forall j = 1,...,N_d\\
%   L &= \Big| \sum_{j=1}^{N_d} \Big\langle \partial_x H(X_j) (A_j+2B_j X_j + 3C_jX_j^2 - 4 |D_j|X_j^3)  \Big\rangle 
%     + 
%     \Big\langle \partial_{x x} H (X_j) \Big \rangle \Big|_\infty
% \end{flalign}
% %   \begin{flalign}
% %   A_j &= \sum_{i\in I} a_i M_{j,i}\ \  \forall j = 1,...,N_d\\
% %   B_j &= \sum_{i \in I} b_i M_{j,i}\ \  \forall j = 1,...,N_d\\
% %   C_j &= \sum_{i \in I} c_i M_{j,i}\ \  \forall j = 1,...,N_d\\
% %   D_j &= \sum_{i \in I} d_i M_{j,i}\ \  \forall j = 1,...,N_d\\
% %   L &= \Big| \sum_{j=1}^{N_d} \Big\langle \partial_x H(X_j) (A_j+2B_j X_j + 3C_jX_j^2 - 4 |D_j|X_j^3)  \Big\rangle 
% %     + 
% %     \Big\langle \partial_{x x} H (X_j) \Big \rangle \Big|_\infty
% % \end{flalign}
% We can use back-propagation to find $a_i, b_i, c_i, d_i$ that minimizes the loss. We repeat sampling $I$ till we converge.

The MESSY algorithm comes in two flavors: MESSY-P, which considers only polynomial basis functions for $\bm H$, and MESSY-S which includes optimization over basis functions using the SR algorithms outlined above. In fact, by convention, the SR algorithm in MESSY-S starts its first iteration using polynomial basis functions up to order $N_m$ as the sample space of smooth functions. In other words, MESSY-P is a special case of MESSY-S with $N_\textrm{iter}=1$. In the remaining iterations of MESSY-S, we perform the symbolic search in the space of smooth functions of order $N_m$ to find $N_b$ bases that provide manageable $\mathrm{cond}(\bm L^\mathrm{ME})$, as discussed in Section \ref{sec:multilevel_recursive_alg}.

In addition, we provide the option to enforce boundedness of $\hat{f}$ on the support that is specified by the user, i.e. letting $\hat f(\bm x)=0$ for all $\bm x$ outside the domain of interest. This allows us to recover distributions with discontinuity at the boundary which may have application in image processing.
  
We also provide an option to further reduce the bias by minimizing the cross-entropy given samples of bounded/unbounded multi-level estimate as prior and  moments of input samples as the target moments (see Appendix~\ref{sec:app_crossMED} for more details on the cross-entropy calculation). For this optional step, we generate samples of $\hat f$ and match the moments of polynomial basis functions up to order $N_m$.  Since the solution at each level of $\hat f$ is  close to the exact MED solution, the optimization problem associated with the moment matching procedure of the cross-entropy algorithm converges very quickly, i.e. in a few iterations, providing us with a correction that minimizes bias along with the weighted samples of our estimate as the by-product. 
We note that in general the order of the randomly created basis function during the MESSY-S procedure may be different from the one used in the cross-entropy moment matching procedure. 

\section{Results}
\label{sec:Results}\vspace{-5pt}
In this section we demonstrate the effectiveness of the proposed MESSY estimation method in recovering distributions given samples, using a number of numerical experiments, involving a range of distributions ranging from multi-mode  to discontinuous. For validation, we provide comparisons with the standard KDE with an optimal bandwidth obtained using K-fold cross-validation with $K = 5$. We consider a uniform grid with 20 elements for the cross-validation in a range that spans from 0.01 to 10 on a logarithmic scale. We also compare with cross-entropy closure with Gaussian as the prior (MxED) using Newton's method. We note that while the standard maximum entropy distribution function differs from MxED as the latter incorporates a prior, we intentionally use MxED as a benchmark instead because the standard approach can be extremely expensive. 

%In the results presented here, we perform $N_{\mathrm{iters}}=10$ iterations for the MESSY-S estimate unless mentioned otherwise.
Unless mentioned otherwise, we report error, time, and KL Divergence by ensemble averaging over $25$ for different sets of samples. Furthermore, in the case of MESSY-S we perform $N_{\mathrm{iters}}=10$ iterations, and we consider $(+, -, \times)$ operators and $(\cos,\, \sin)$ functions. Here we report the execution time using a single-core CPU for each method.
Typical symbolic expressions of density functions recovered by MESSY for the test cases considered here can be found in Appendix~\ref{sec:MESSY_expr_exponential_density}. Furthermore, in Appendix~\ref{sec:ablation_MESSY}, we perform an ablation study that shows the importance of each component of the  MESSY algorithm.

\subsection{Bi-modal distribution function}
\label{sec:bimodal_dist}
For our first test case, we consider a one-dimensional bi-modal distribution function constructed by mixing two Normal distribution functions $\mathcal{N}(x\,|\,\mu, \sigma)$, i.e.
 \begin{eqnarray}
f(x)=\alpha \,\mathcal{N}(x\,|\,\mu_1,\sigma_1)+(1-\alpha)\,\mathcal{N}(x\,|\,\mu_2,\sigma_2),
\label{eq:bi-modal_dist}
\end{eqnarray}
with $\alpha=0.5$, means
$\mu_1 = -0.6$ and $\mu_2=0.7$, and standard deviations $\sigma_1=0.3\ \text{and}\ \sigma_2 = 0.5$.
\\ \ \\
Figure~\ref{fig:bi-modal-dist_KDE_MxED_MESSY} compares results from MESSY, KDE and MxED for three different sample sizes, namely $100$, $1000$, and $10,000$ samples of $f$. For MxED and MESSY-P, we use $N_b=N_m=4$.
In the case of MESSY-S, we randomly create $N_b$ basis functions which are $\mathcal{O}(x^4)$ (where $N_b$ is sampled uniformly within $\{2,\hdots,8\}$).
%We let MESSY-S run for $N_{\mathrm{iters}}=5$ stochastic rounds of search. 
Both MESSY results are subject to a cross-entropy correction step with $N_b=4$  polynomial moments. Clearly, the MxED and MESSY methods provide a reasonable estimate when a small number of samples is available; where KDE may suffer from bias introduced by the smoothing kernel. As a reference for the reader, we also compared the outcome density with a histogram in Appendix~\ref{sec:hist_bi-modal}.
\\ \ \\
In order to analyze the error further, Fig~\ref{fig:bi-modal_err_KL_cost_Nsamples} presents the relative error in low and high order moments, KL Divergence, and single-core CPU time as the measure of computational cost for considered methods. Given optimal bandwidth is found, the KDE error can only be reduced by increasing the number of samples. However, maximum entropy based estimators such as MxED and MESSY provide more robust estimate when less samples are available. We point out that the convergence of the cases where only moments of polynomial basis functions are considered, i.e. MxED and MESSY-P, relies on the degree of the polynomials and not the number of samples. On the other hand, the additional search associated with MESSY-S returns more appropriate basis functions for a given upper bound on the order of \mbox{the basis functions.}

%While the error in the KDE method decreases by increasing the number of samples, the error in the parametric estimators (MESSY, MxED) does not seem to decrease significantly. This suggests that the parametric description suffers from modeling error, which may be reduced by expanding its degrees of freedom.
%%%%%%%%%%%%%%%%
\begin{figure}[h]
 \hspace{-1.2cm}
    \centering
    \begin{tabular}{ccc}
     \includegraphics[scale=0.55]{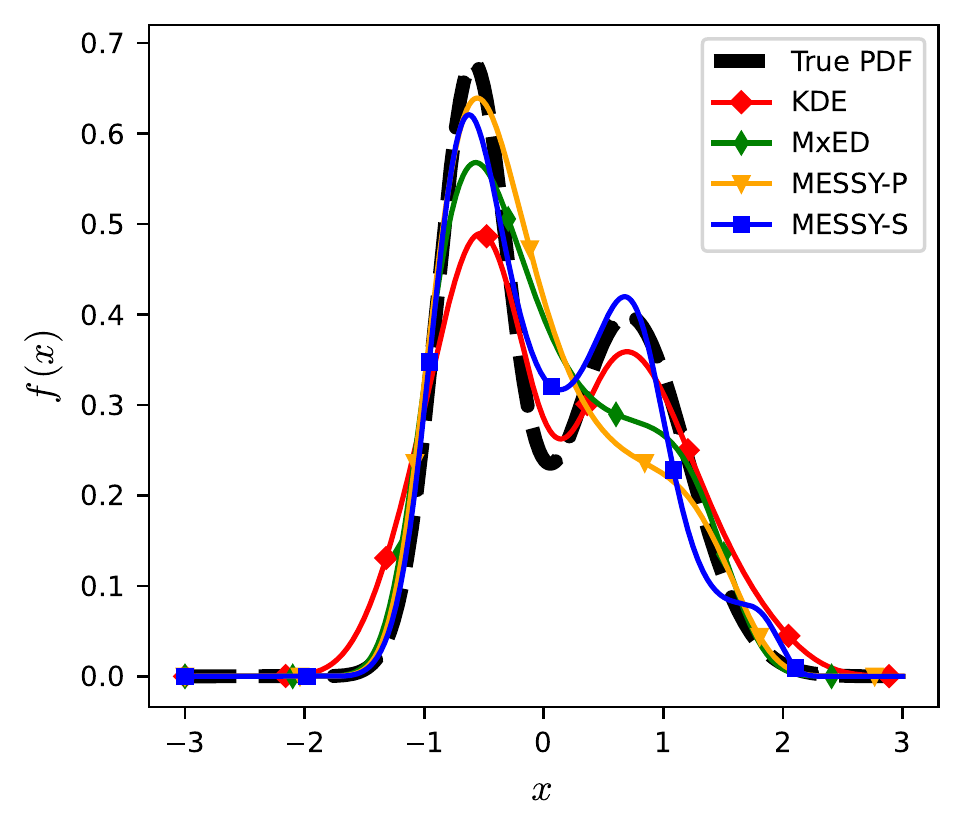} 
    &
\includegraphics[scale=0.55]{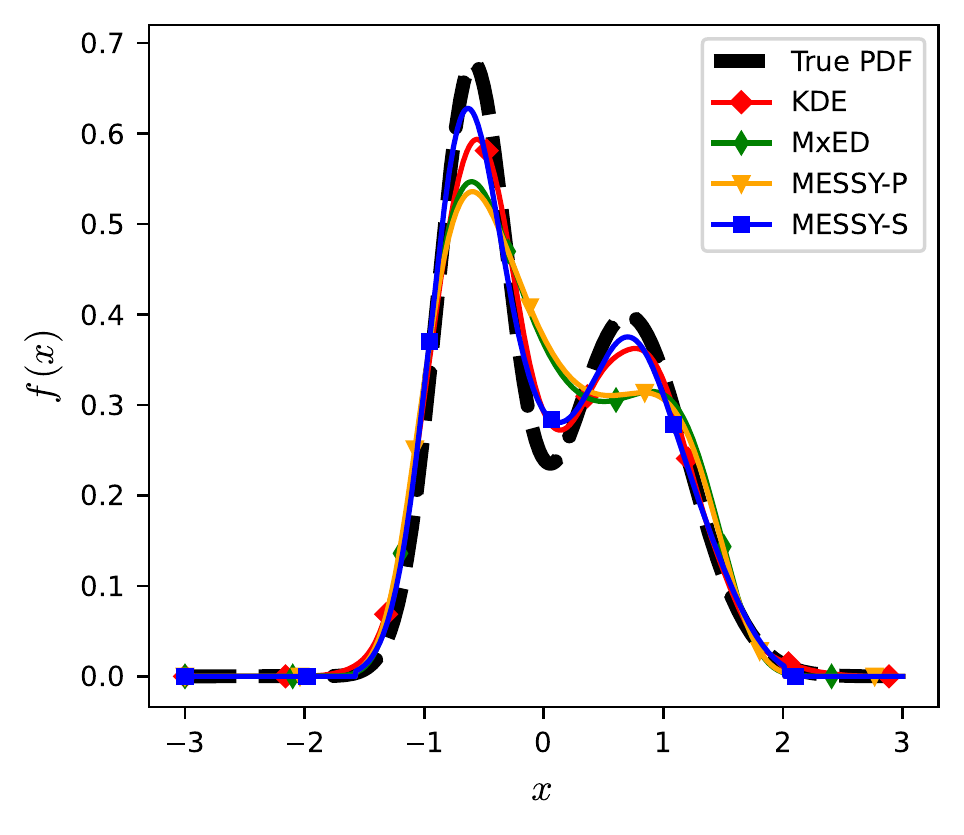}
&
\includegraphics[scale=0.55]{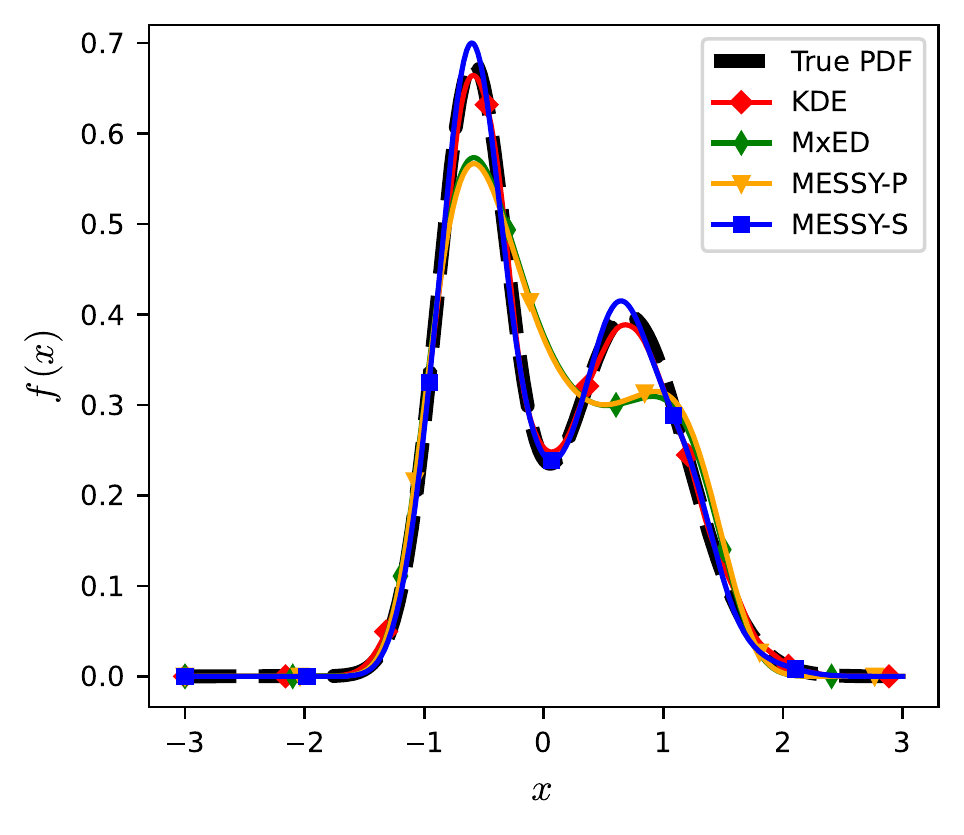}
\\
\hspace{17pt}
(a) 100 samples 
& \hspace{12pt}
(b) 1,000 samples
& \hspace{15pt}
(c) 10,000 samples
    \end{tabular}
    \caption{Density estimation using KDE, MxED, MESSY-P, and  MESSY-S given (a) 100, (b) 1,000, and (c) 10,000 samples.}
    \label{fig:bi-modal-dist_KDE_MxED_MESSY}
\end{figure}
%%%%%%%%%%%%%%%%
\begin{figure}[h]
    \centering
    \begin{tabular}{cc}
    \hspace{-9.5pt}
     \includegraphics[scale=0.59, trim={0 20 0 0},clip]{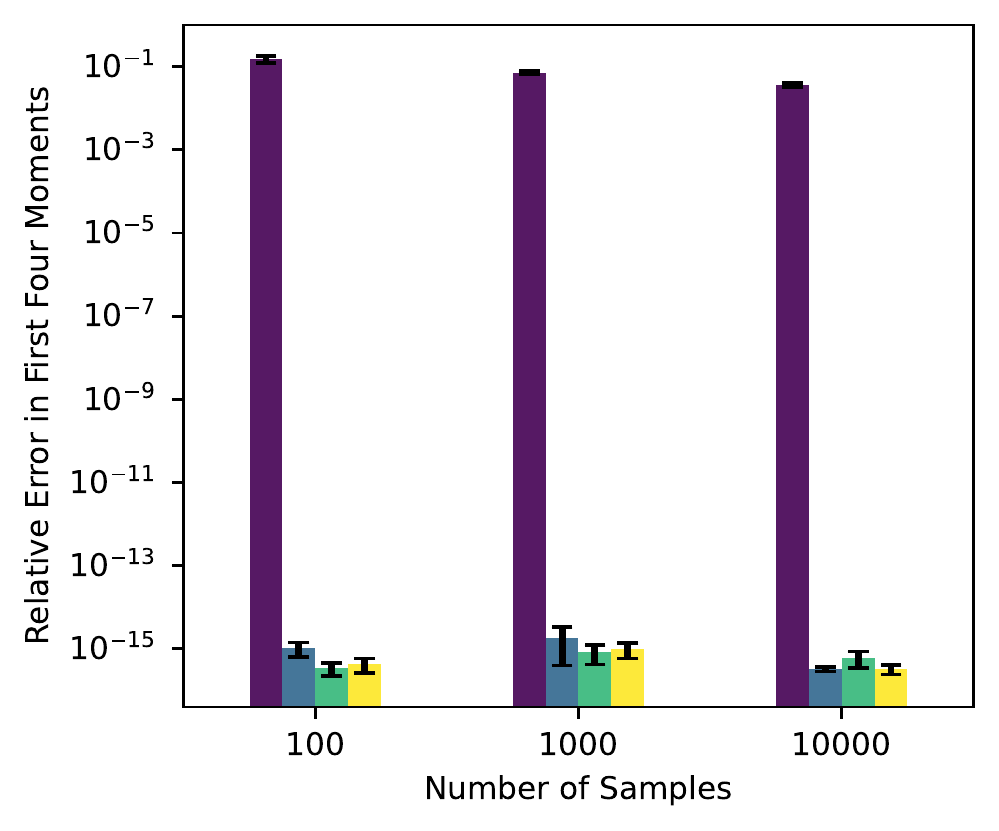} 
    &
    \includegraphics[scale=0.59, trim={0 20 0 0},clip]{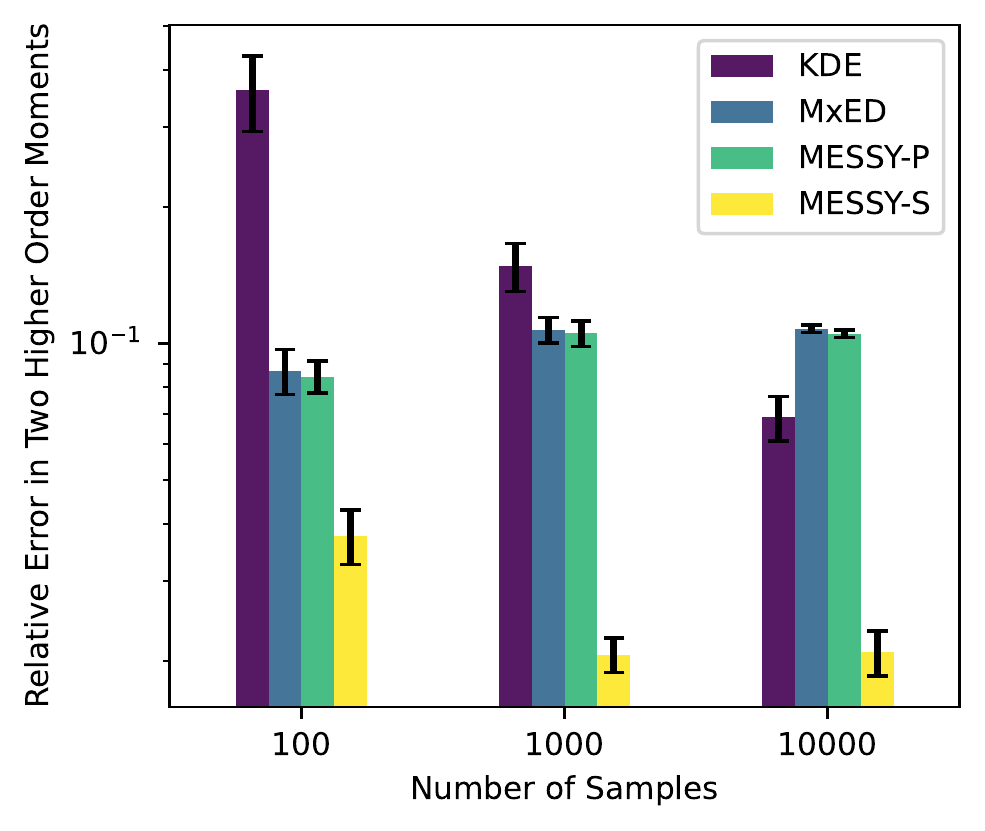}
    \\
    \hspace{18pt} (a) & \hspace{26pt}(b)\\
    % \hspace{-0.5pt}
    \includegraphics[scale=0.59]{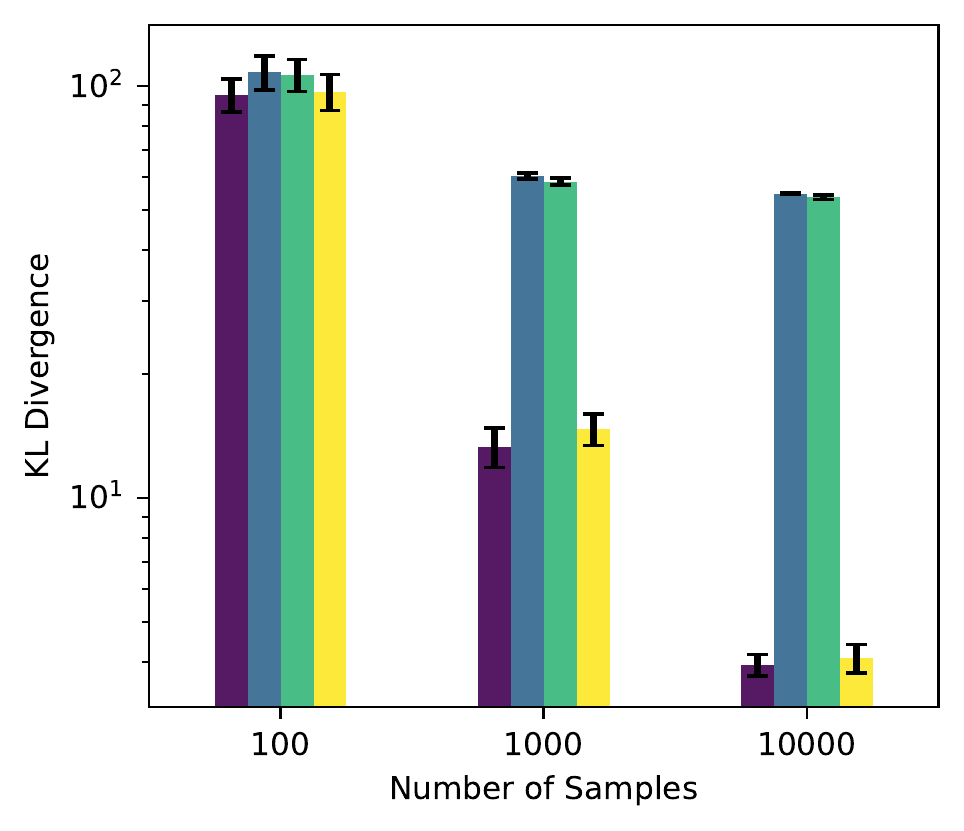}
    & 
    \includegraphics[scale=0.59]{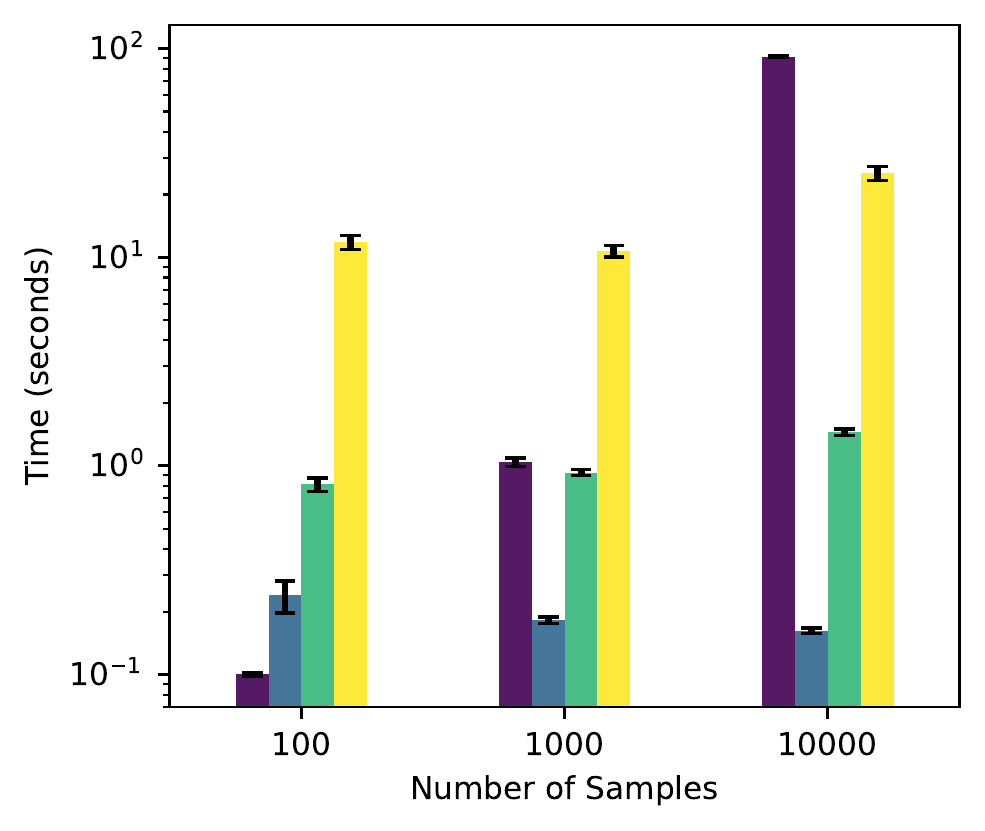}
    \\
    \hspace{18.5pt} (c) & \hspace{26.5pt}(d)\\
    \end{tabular}
    \caption{Comparing the relative error in (a) the first four moments, (b) two higher order moments (i.e. fifth and sixth moments), (c) KL Divergence, and (d) the execution time for KDE, MxED, MESSY-P, and MESSY-S in recovering distribution function for different sample sizes. 
    %using  $100$, $1000$, and $10,000$ samples of the target. 
    Here, the error bar (in black) corresponds to the standard error of the empirical measurements.}
    \label{fig:bi-modal_err_KL_cost_Nsamples}
\end{figure}
%%%%%%%%%%%%%%%%
Next, we perform a convergence study on 10,000 samples and show that the parametric description converges to the solution when its degrees of freedom are increased. In Fig.~\ref{fig:bi-modal_convergance_Nm_Nb}, we show that both MESSY-P and MESSY-S converge to the true solution by increasing either the order of polynomial basis function, or the number of basis functions, respectively. In the case of MESSY-S, we generated symbolic expressions that are $\mathcal{O}(x^2)$.  The improved agreement compared to the MESSY-P case highlights the benefit derived from non-traditional basis functions that may better represent the data.
\\ \ \\
As shown in Fig.~\ref{fig:bimodal_kl_time_Nm_Nb}, the MESSY-S procedure results in better-conditioned $\bm L^{\mathrm{ME}}$ matrices than the MESSY-P for the same degrees of freedom.  However, the search for a \textit{good} basis function increases the computational cost. In each iteration of the search for basis functions, the MESSY-S algorithm may reject symbolic basis candidates  based on the condition number of the matrix $\bm L^{\mathrm{ME}}$. In other words, the improved performance associated with MESSY-S comes at some increased computational cost. 
%%%%%%%%%%%%%%%%
\begin{figure}[h]
    \centering
    \hspace{-20pt}
    \begin{tabular}{cc}
    \includegraphics[scale=0.83]{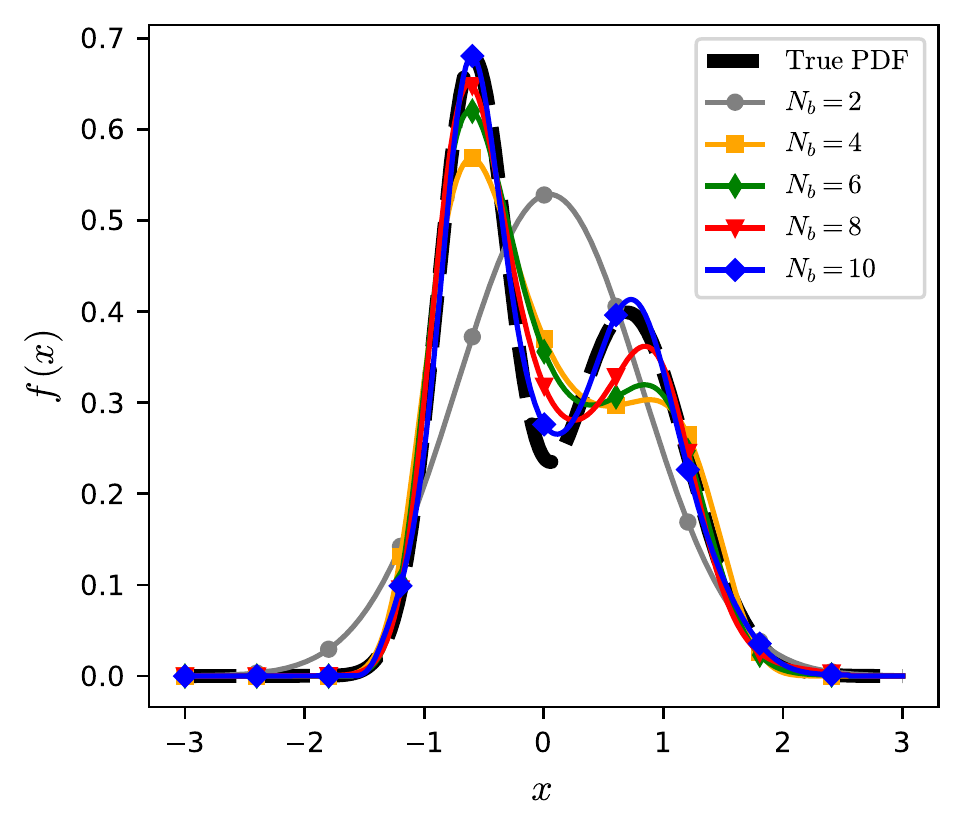}
    &
         \includegraphics[scale=0.83]{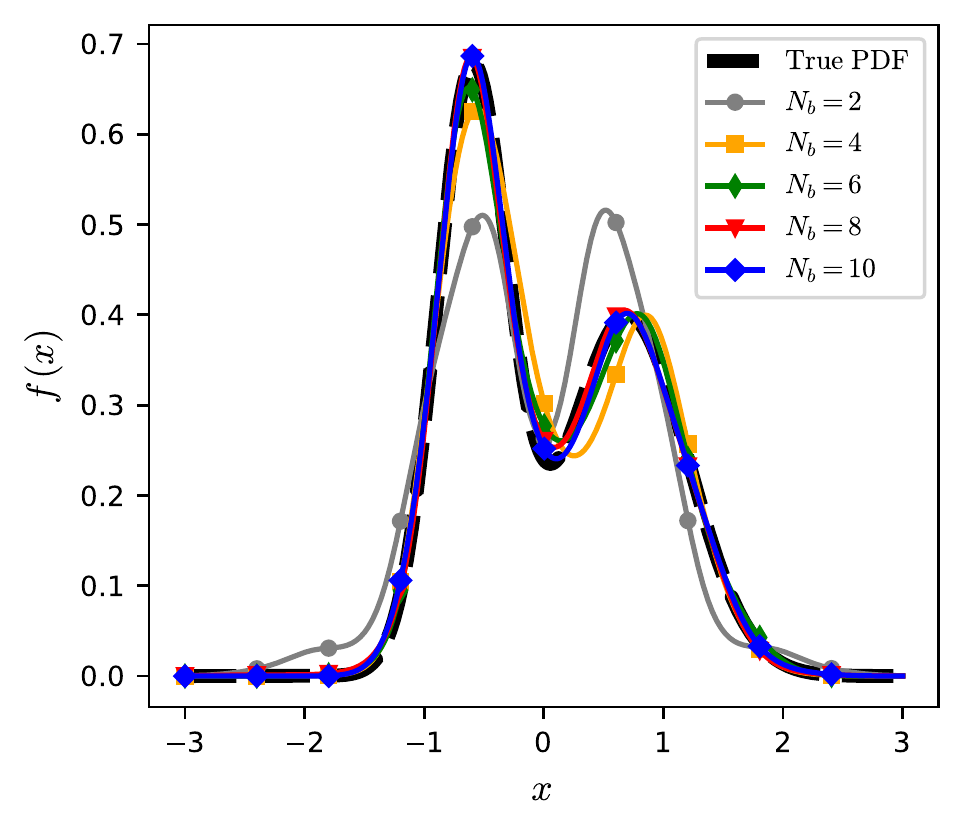}
     \\
     \hspace{1.05cm}(a) MESSY-P
     &
     \hspace{1.08cm}(b) MESSY-S 
    \end{tabular}
%    \vspace{-0.3cm}
    \caption{Convergence of MESSY estimation to target distribution function by (a) increasing the order of polynomial basis functions for MESSY-P or (b) increasing the number of randomly selected symbolic basis functions with  $N_m=2$ for MESSY-S.}
    \label{fig:bi-modal_convergance_Nm_Nb}
\end{figure}
% \subsection{Solution to moment problem}

\begin{figure}[h]
    \centering
    \hspace{-30.1pt}
    \begin{tabular}{ccc}
    \includegraphics[scale=0.76]{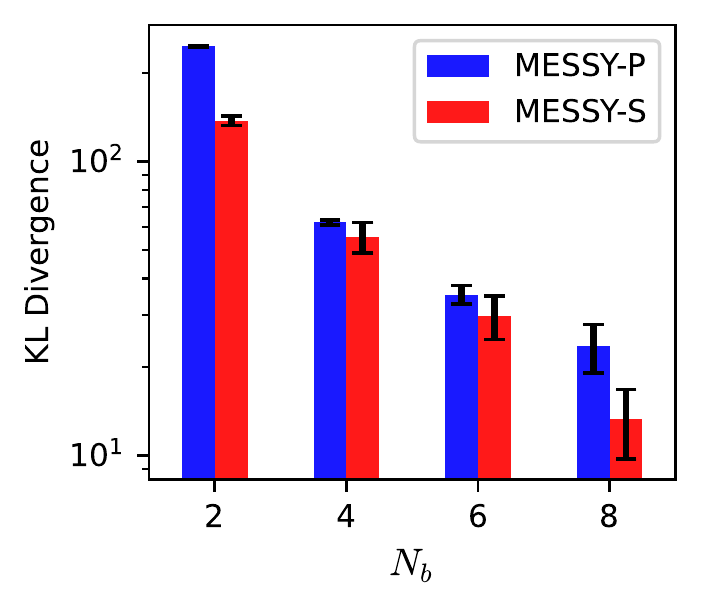} 
    &
    \includegraphics[scale=0.76]{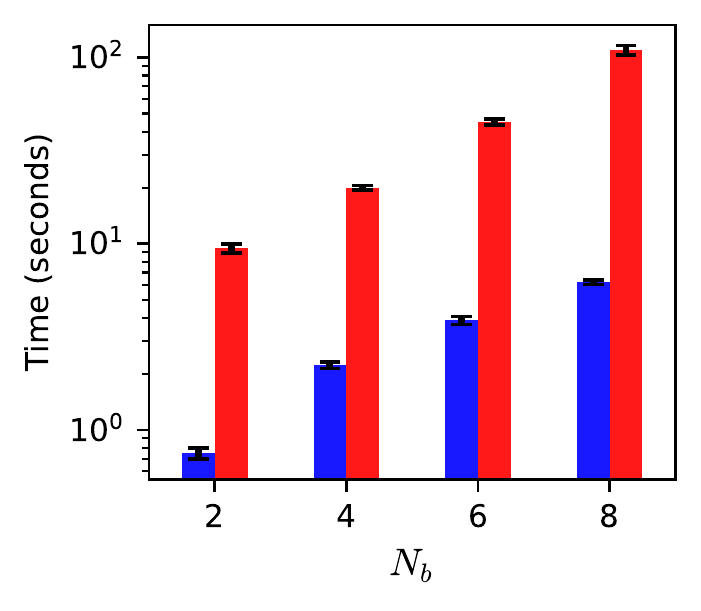} 
         &
         \includegraphics[scale=0.76]{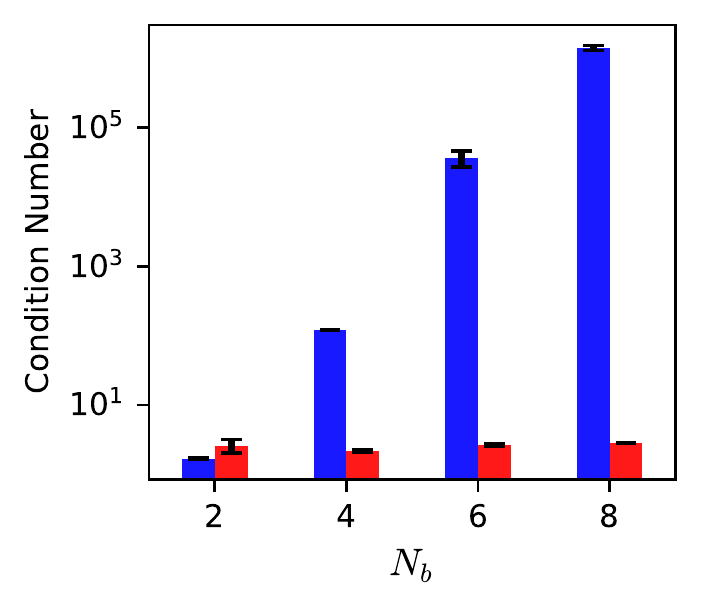} 
  % \\   \hspace{0.96cm} (a)    & \hspace{0.96cm} (b)    &   \hspace{0.96cm} (c)
    \end{tabular}
    \vspace{-0.3cm}
    \caption{KL Divergence, execution time, and  condition number against the degrees of freedom, i.e. the order of polynomial basis functions for MESSY-P or the number of symbolic basis functions with  $N_m=2$ for the MESSY-S estimate.}
    \label{fig:bimodal_kl_time_Nm_Nb}
\end{figure}

% \FloatBarrier

%\begin{figure}[h]
%    \centering
%    \hspace{-20pt}
%    \begin{tabular}{cc}
%    \includegraphics[scale=1]{} 
%    \end{tabular}
%    \caption{Condition number of the hessian vs increasing the order of polynomial basis functions (i.e. the number of moments $N_m$) and increasing the number of random bases functions $N_b$ with the highest order of $2$.}
%    \label{fig:cond_Nm_Nb}
%\end{figure}

%\FloatBarrier % Makes sure that previous figures are printed
\newpage
\subsection{Limit of realizability}
\label{sec:limit_realizability}
One of the challenging moment problems for maximum entropy methods is the one involving distributions near the border of physical realizability. In the one-dimensional case with moments of the first four monomials $[x, x^2, x^3, x^4]$ as the input, the moment problem is physically realizable when 
\begin{flalign}
\int x^4 f(x) dx \geq \left(\int x^3 f(x) dx \right)^2 + 1.
\label{eq:realizability_cond_4mom_sys}
\end{flalign}
The moment problem with moments approaching the equality in Eq.~(\ref{eq:realizability_cond_4mom_sys}) is called \emph{limit of realizablity} \citep{mcdonald2013affordable,akhiezer1965classical}. We consider samples from a distribution in this limit as our test case here, since the standard MED cannot be solved due to an ill-conditioned Hessian (see \cite{abramov2007improved,alldredge2014adaptive}).
\\ \ \\
In Fig.~\ref{fig:limit_real_dist}, we depict the estimated density of a bi-modal distribution in this limit given its samples
%Wasserstein-Maximum entropy closure 
%The coresponding 4-moments system 
with moments $\langle X\rangle=0$, $\langle X^2\rangle=1$, $\langle X^3\rangle=-2.10$ and $\langle X^4 \rangle=5.42$. Here, we compare the density obtained using KDE, MxED, MESSY-P, and MESSY-S to the histogram of samples. In this example, we obtained the MESSY-S estimate by searching %performing $N_{\mathrm{iters}}$ iterations 
in the space of smooth functions  with $N_b\in\{2,...,8\}$ basis functions and compare polynomial and symbolic basis functions of order $2$ and $4$.  

%\newpage
In Fig.~\ref{fig:limreal_kl_time_Nm_Nb}, we compare the KL Divergence, the execution time, and the condition number for each method. While KDE suffers from over-smoothing and MxED/MESSY-P require at least $N_b=4$ (and consequently $N_m=4$, resulting in a stiff problem with large condition number), MESSY-S can obtain accurate density estimates by using unconventional basis functions with $N_m=2$,
thus  maintaining a manageable \mbox{condition number.} 

%%%%%%%%%%%%%%%%%%%%%%%%
\begin{figure}[h]
    \centering
    \hspace{-20pt}
    \begin{tabular}{cc}
    \includegraphics[scale=0.855]{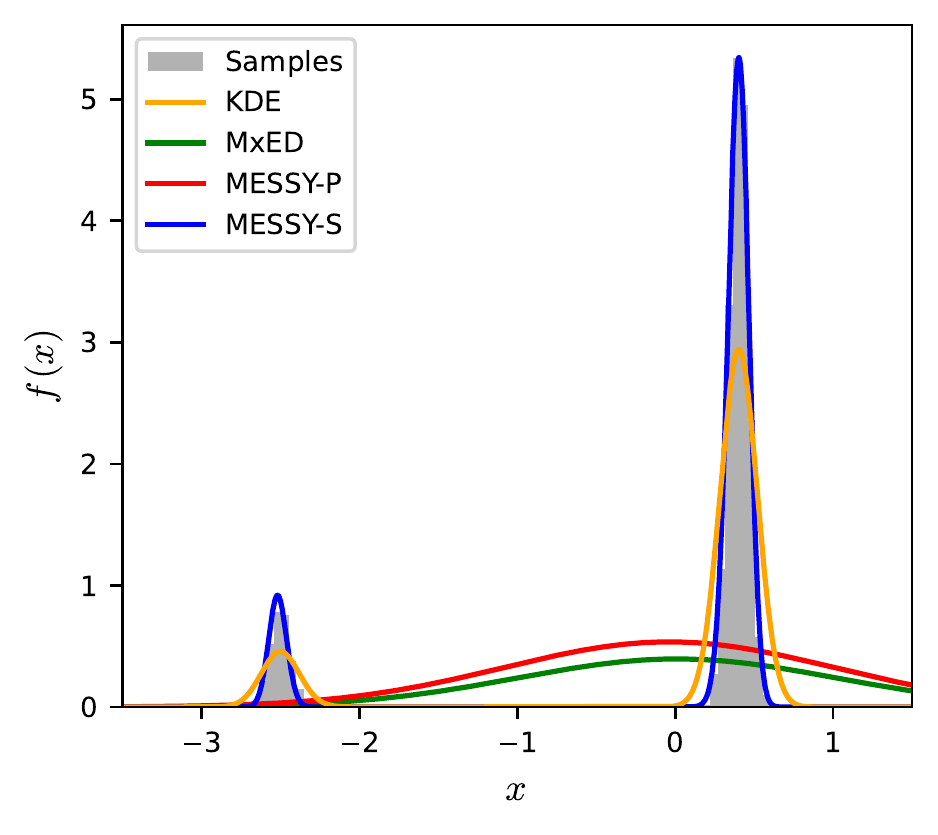}
    &
    \includegraphics[scale=0.855]{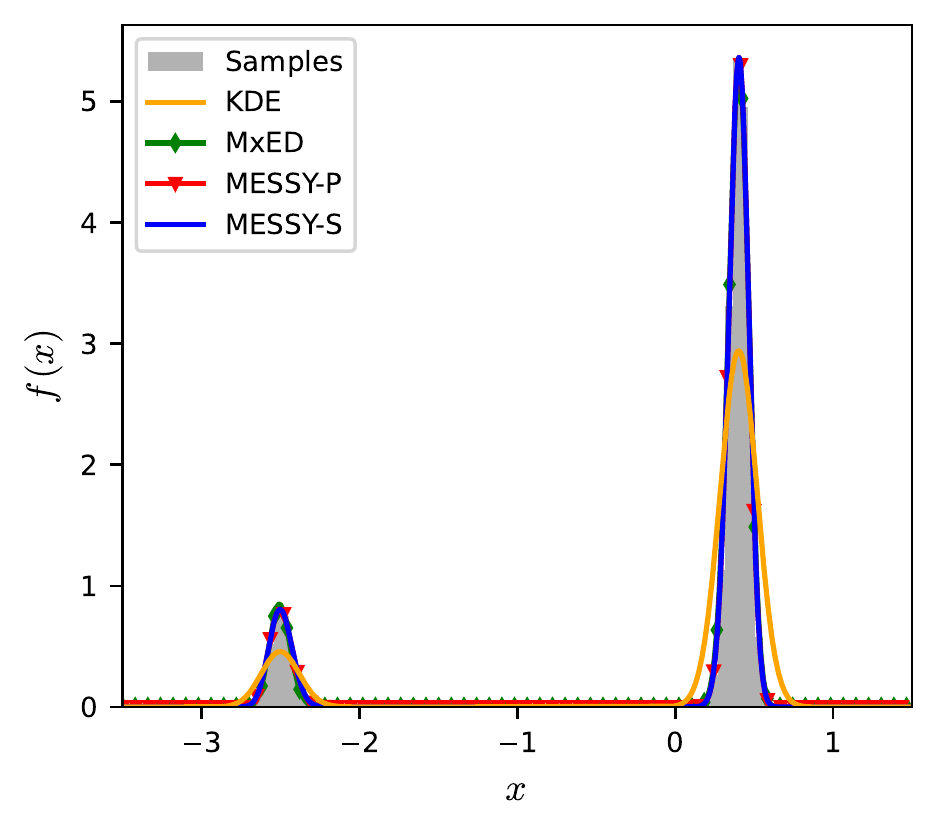}
    \end{tabular}
    \caption{Estimating density for a case of distribution near the limit of realizability using KDE, MxED, MESSY-P, and MESSY-S. The solutions of MxED, MESSY-P, and MESSY-S are obtained using basis functions of second (left) and fourth (right) order. %The solution from MESSY-S contains $N_b=6$ optimally selected basis functions.
    }
    \label{fig:limit_real_dist}
\end{figure}

\begin{figure}[h]
    \centering
    \hspace{-25pt}
    \begin{tabular}{ccc}
    \includegraphics[scale=0.73]{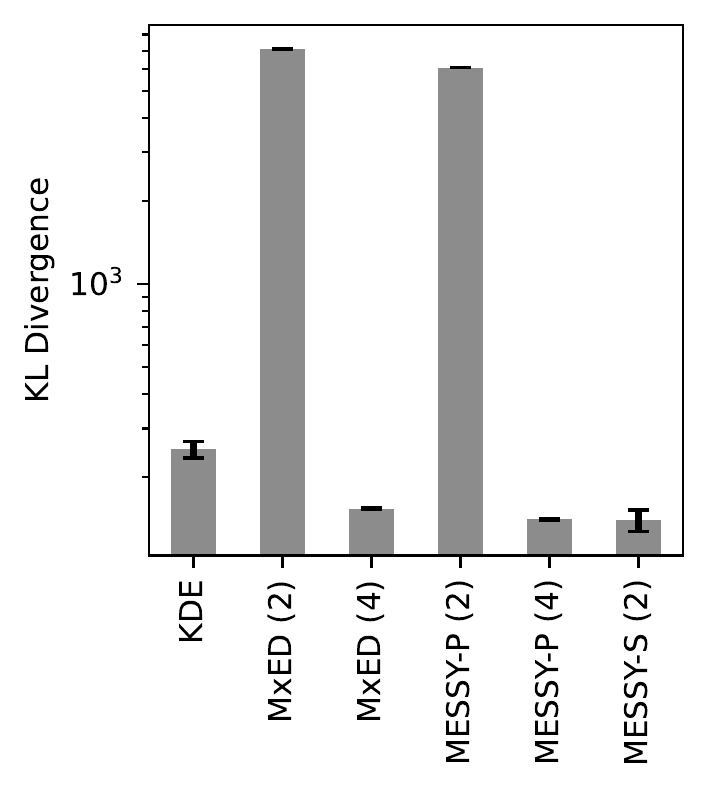} 
    &
    \includegraphics[scale=0.73]{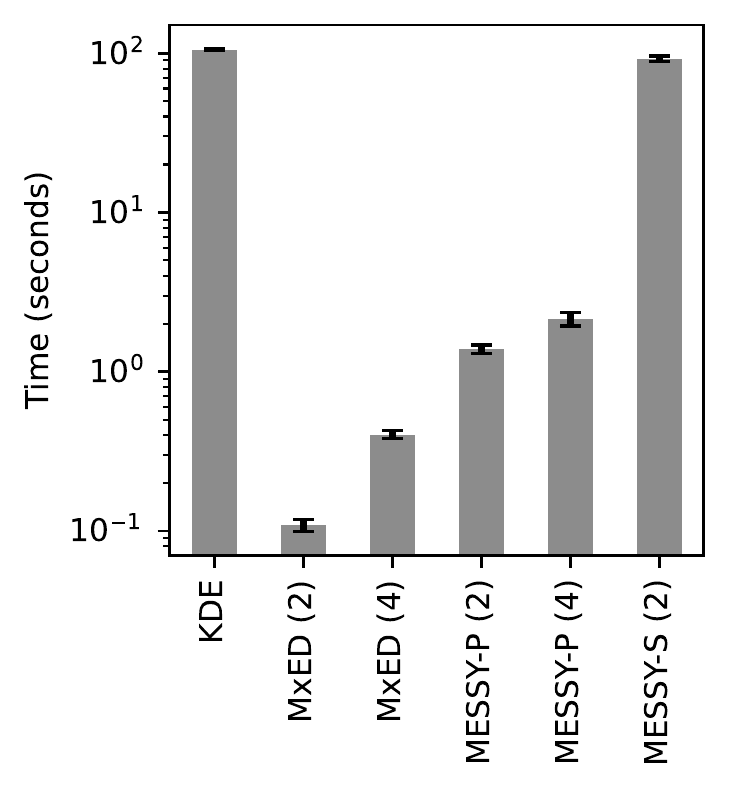} 
         &
         \includegraphics[scale=0.73]{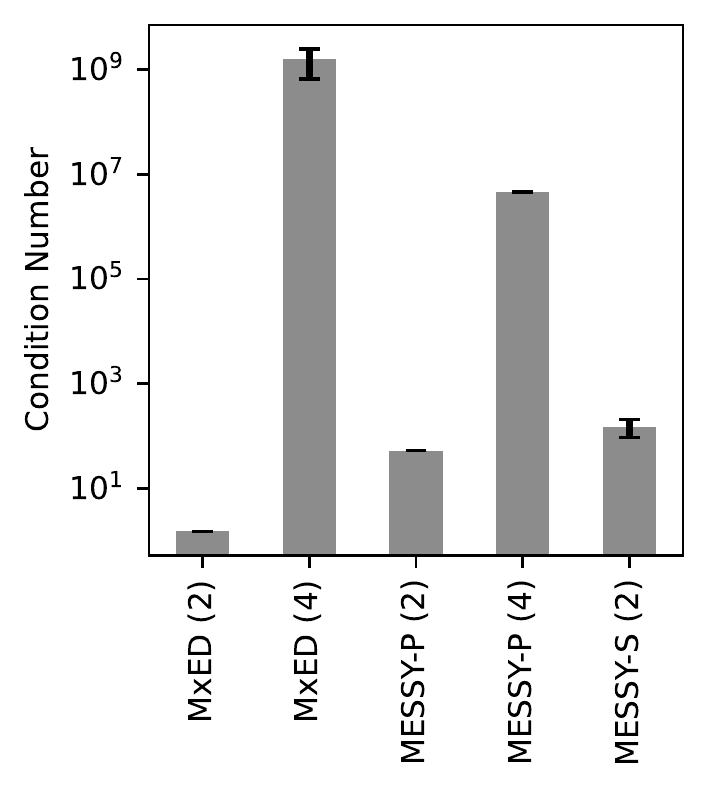} 
  %  \\    (a)    & (b)    &    (c)
    \end{tabular}
    \caption{Comparing KL Divergence, execution time, and condition number of KDE, MxED, MESSY-P, and MESSY-S %estimate of the density 
    for an unknown distribution near the limit of the realizability. Here, we consider polynomial basis functions of second and fourth order for MxED and MESSY-P denoted by MxED (2), MxED (4), MESSY-P (2) and MESSY-P (4), respectively. In MESSY-S, we consider symbolic basis functions of second order only which we denote by MESSY-S (2). 
    }
    \label{fig:limreal_kl_time_Nm_Nb}
\end{figure}

% \FloatBarrier % Makes sure that previous figures are printed
\newpage
\subsection{Discontinuous distributions}
\label{sec:discont_distr}
We now highlight the benefits of using MESSY estimation with \emph{piecewise continuous} capability for recovering distributions with a discontinuity at the boundary. As an example, let us consider the exponential distribution with a probability density function given by\vspace{-8pt}\\
\begin{align}
    f(x) = 
    \begin{cases}
  a e^{-a x} & \text{if}\ x \geq 0 \\
  0 & \text{otherwise}
\end{cases}
\label{eq:exponential_dist}
\end{align}
with $a = 1$.
\\ \ \\
Given $10,000$ samples of this distribution, 
in Fig.~\ref{fig:expon_dist} we compare KDE, MxED, and the proposed MESSY-P and MESSY-S methodologies. In the case of MxED and MESSY-P we consider second-order polynomial basis functions, and for MESSY-S
%and symbolic basis functions to recover the underlying density. In this example, for MESSY-S 
we search the space of smooth functions for $N_b\in \{2,...,8\}$ symbolic basis functions of order $\mathcal{O}(x^2)$.
%and perform $N_{\mathrm{iters}}=10$ rounds in this random search. 
For MESSY-P and MESSY-S, we apply the boundary condition
\begin{flalign}
    \hat{f}(x) = 0\ \ \  \forall   x<\min(X).
\end{flalign}
By providing information about the boundedness of the expected distribution, we enable MESSY to accurately predict densities with discontinuity near the boundary. As it can be seen clearly from Fig.~\ref{fig:expon_dist}, in contrast to MxED, both MESSY-P and MESSY-S provide accurate predictions by taking advantage of the information about the boundedness of the target density.
%\vspace{-0.5cm}
%%%%%%%%%%%%%%%%%%%%%%%%%
\begin{figure}[h]
    \centering
    \hspace{-20pt}
    %\begin{tabular}{cc}
    \includegraphics[scale=0.75]{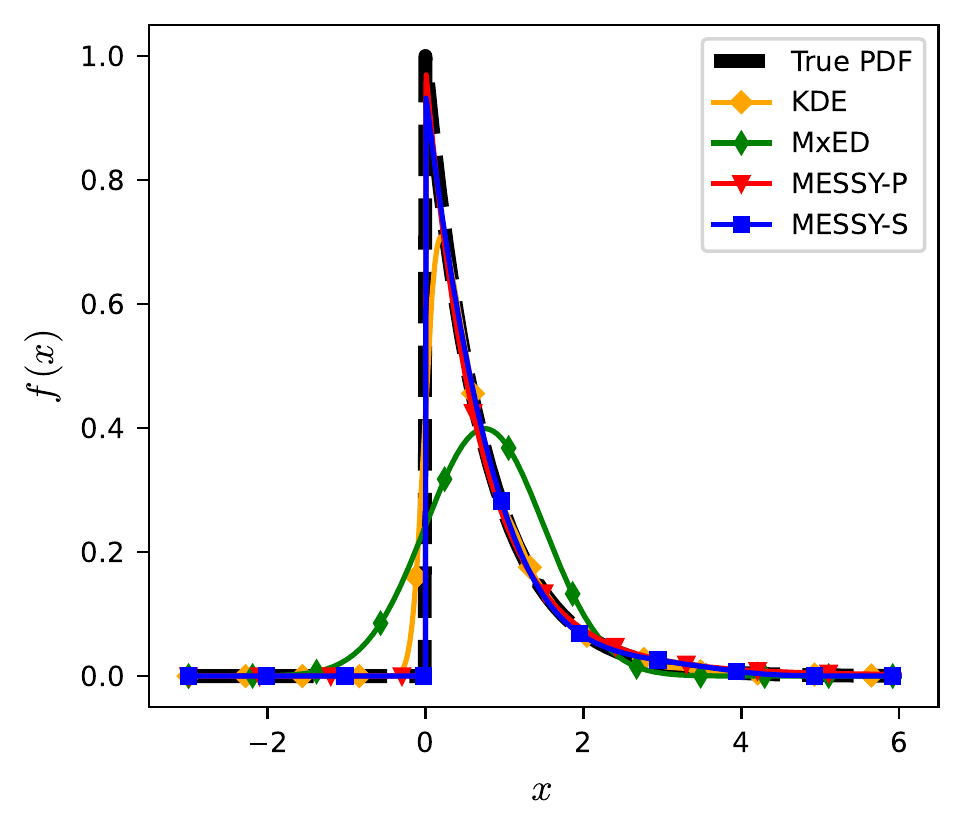}
    %\end{tabular}
    \vspace{-0.25cm}
    \caption{Estimating density of exponential distribution function from its samples using KDE, MxED, MESSY-P, and MESSY-S. For MxED, MESSY-P and MESSY-S, with $N_m=2$.
    %minimize the cross-entropy up to 2nd order moments ($N_m=2$). 
    %Here, the MESSY-S solution in blue contains $N_b=6$ basis functions of $\mathcal{O}(x^2)$.
    %{\bf the text says $N_b$ can be up to 8}} {\color{blue}MS: $N_b$ is number of basis functions which is randomly chosen between 2,8. $N_m$ is the highest order of the basis function considered which is 2. $N_m$ is also the maximum moment order from monomial $R$ considered for the MxED step. See its definition in the beginning of the result section.}
    }
    \label{fig:expon_dist}
\end{figure}

 The KL Divergence score and execution time for each method is shown in Fig.~\ref{fig:discontinuous-kl-time}. These figures show that MESSY-P and MESSY-S provide a more accurate description compared to the KDE estimate, albeit at a higher computational cost. %{\bf the seconds in cost only make sense if you give information about the hardware (CPU, single core/parallel, etc...}{\color{blue}MS: I agree that we should say we use the same computational resources (single core CPU), although what matters is the relative cost.}
 
%\vspace{-0.3cm}
%%%%%%%%%%%%%%%%%%%%%%%%%
\begin{figure}[h]
    \centering
    \hspace{-20pt}
    \begin{tabular}{cc}
    \includegraphics[scale=0.67]{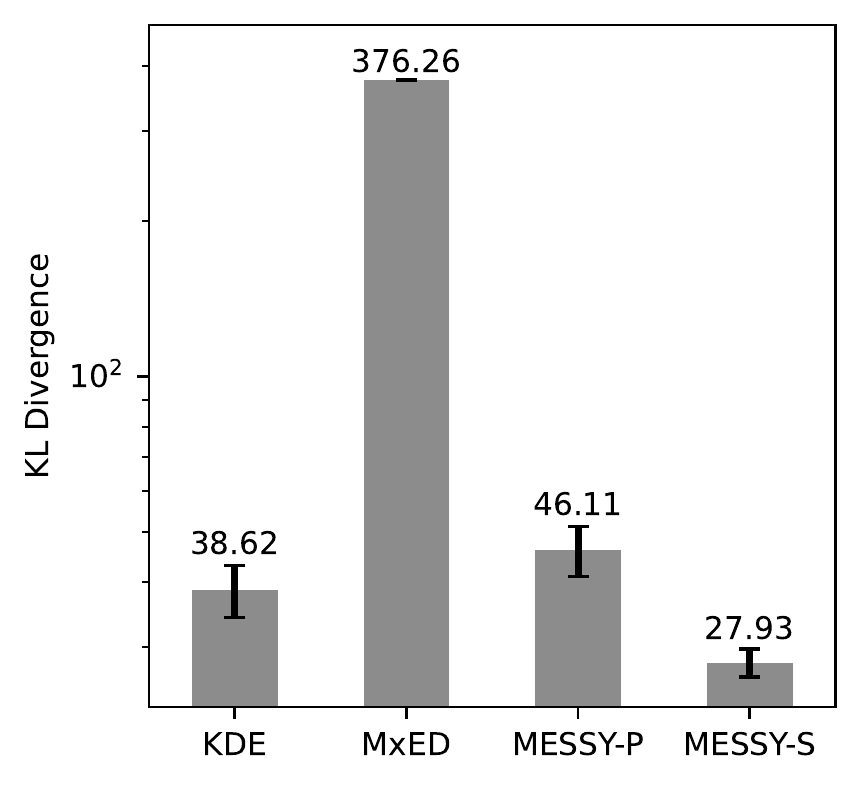}
    &
         \includegraphics[scale=0.67]{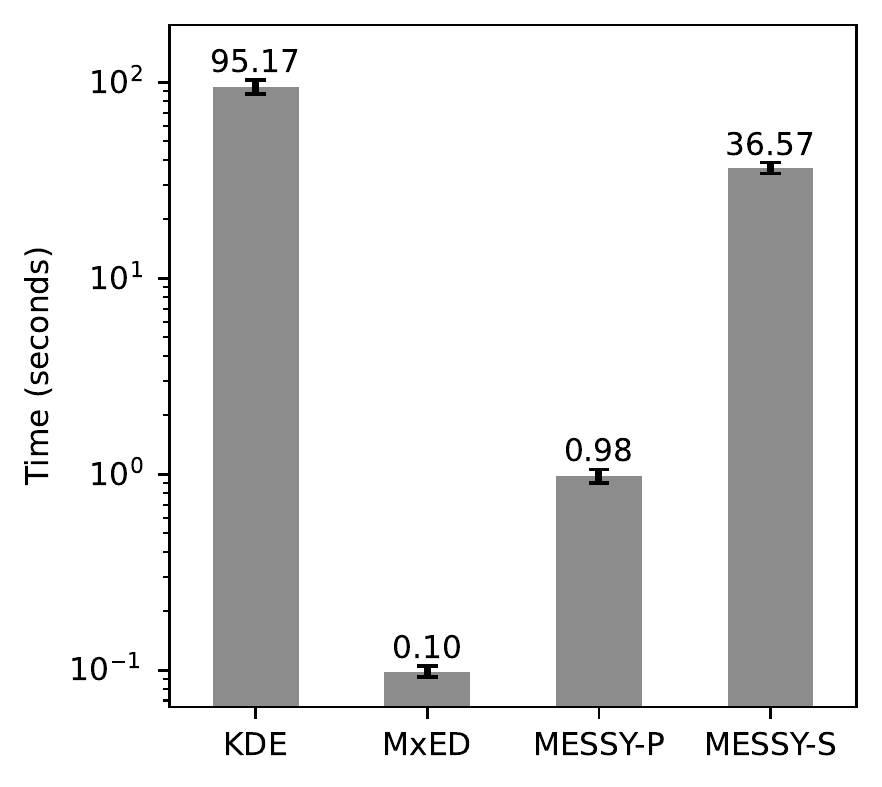}
%    \\
%(a) & (b)
    \end{tabular}
    \vspace{-0.2cm}
    \caption{KL Divergence and execution time for KDE, MxED, MESSY-P, and MESSY-S estimation of exponential distribution function given 10,000 samples.}
    \label{fig:discontinuous-kl-time}
\end{figure}

\newpage
\subsection{Two-dimensional distributions}
In this section the proposed MESSY methodology is applied to the estimation of 2-dimensional distributions. Let us consider a multivariate Gaussian distribution in two dimensions, i.e. $\bm X = (X_1, X_2) \sim \mathcal{N}(\boldsymbol{\mu}, \boldsymbol{\Sigma})$ with probability density 
\begin{align}
\mathcal{N}(\bm x; \boldsymbol{\mu}, \boldsymbol{\Sigma}) = \frac{1}{(2\pi)^{d/2}\sqrt{\det(\boldsymbol{\Sigma})}}\,\exp\left\{-\frac{1}{2} (\bm x - \boldsymbol{\mu})^T\boldsymbol{\Sigma}^{-1}(\bm x - \boldsymbol{\mu})\right\}
\label{eq:multi-variate-normal-distr}
\end{align}
where $d=2$, $\bm x = (x_1, x_2) \in \mathbb{R}^2$, $\boldsymbol{\mu}$ is the mean vector, and $\boldsymbol{\Sigma}$ is the covariance matrix. In this example,
\begin{align}
\boldsymbol{\mu} = \begin{bmatrix}
    0\\
    0
  \end{bmatrix},
  \qquad
  \boldsymbol{\Sigma} = \begin{bmatrix}
    1 & 0.5\\
    0.5 & 1
  \end{bmatrix}~.
\end{align}
Let us also consider a more complex 2D distribution by multiplying two independent 1D distributions, namely the exponential and Gamma distributions, i.e. $X_1 \sim \text{Exp}(\lambda)$, $X_2 \sim \Gamma(k, \theta)$, and $\bm X = (X_1, X_2) \sim \text{Exp}(\lambda)\Gamma(k, \theta)$. In particular, we consider samples of the joint pdf given by
\begin{align}
f(x_1, x_2; \lambda, k, \theta) = f(x_1; \lambda)  f(x_2; k, \theta) = 
\begin{cases}
\lambda e^{-\lambda x_1} \cdot \frac{x_2^{k-1}e^{-x_2/\theta}}{\Gamma(k)\theta^k} & x_1 \geq 0,\, x_2 \geq 0 \\
0 & \text{otherwise} \\
\end{cases}
\label{ExpGamma}
\end{align}
where $\lambda > 0$ is the rate parameter, $k > 0$ is the shape parameter, $\theta > 0$ is the scale parameter, and $\Gamma(\cdot)$ is the gamma function. In our example, we use $\lambda = 2$, $k = 3$, and $\theta = 0.5$.
\\ \ \\
Fig.~\ref{fig:2dpdf} compares the true distribution against the KDE, MESSY-P and MESSY-S estimate given $10^4$ samples of the distribution. In cases of MESSY-P and MESSY-S, we consider basis functions up to $2$nd order in each dimension. The figure shows that  KDE, MESSY-P and MESSY-S provide a reasonable estimate for the multivariate normal distribution. However, among all considered estimators for the case of the Gamma-exponential distribution, MESSY-S seems to provide a slightly better estimate, i.e. it captures the peak in the most probable region of the distribution.

\begin{figure}[h]
\centering
\begin{tabular}{cc}
%\hspace{-13pt}
\begin{turn}{90} \hspace{0.65cm} Gamma-exp. dist. \hspace{0.75cm} Gaussian dist.  \end{turn} 
    &\hspace{-10pt}
 \includegraphics[scale=0.65, trim={0 0 0 0},clip]{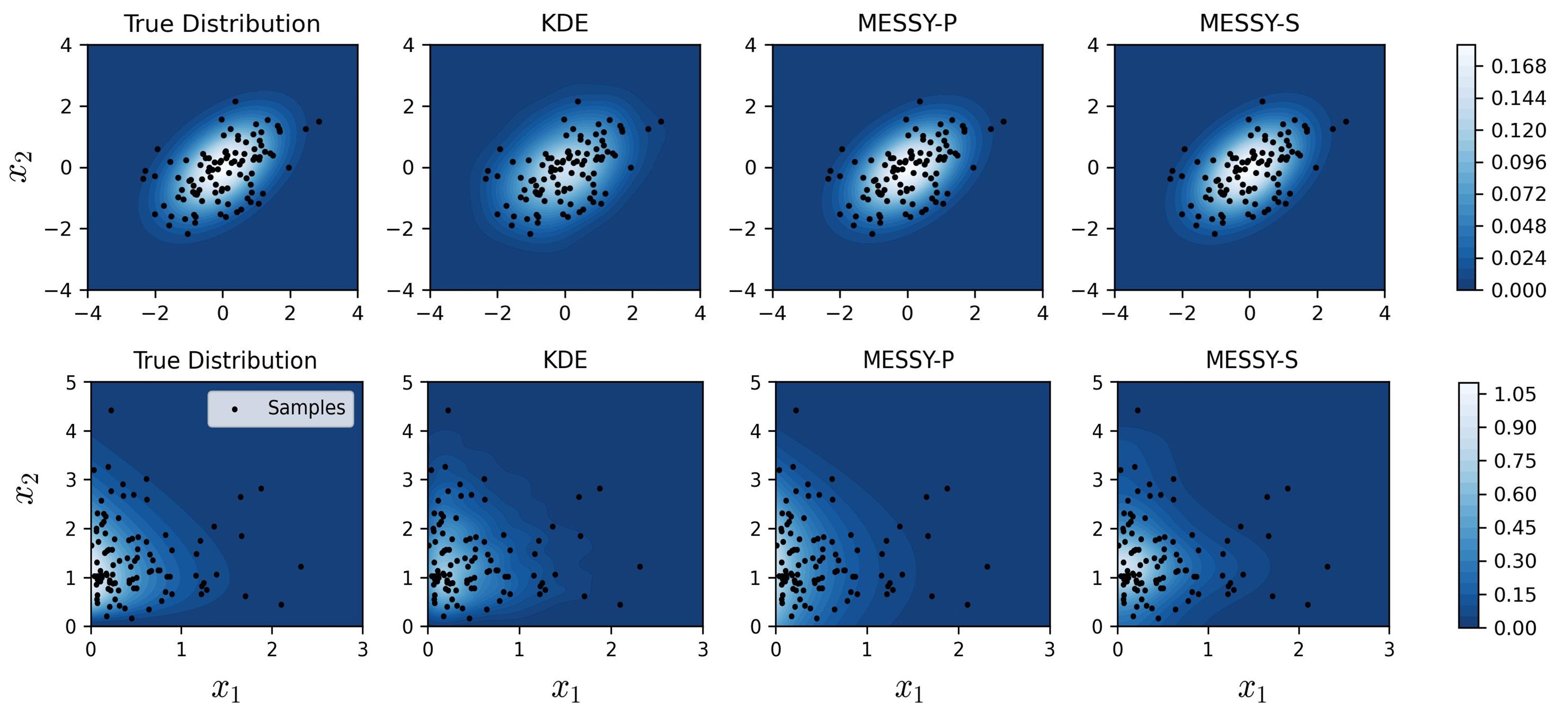} 
\end{tabular}
\caption{Estimating  the density of samples in two dimensions using KDE, MESSY-P, and MESSY-S. We use 10,000 samples from a Gaussian (top) and the Gamma-exponential density defined in Eq.~(\ref{ExpGamma}) (bottom). For better visualization, we only show 100 random samples. We also report that the KL-divergence for MESSY-S, MESSY-P and KDE are on the same order.}
\label{fig:2dpdf}
\end{figure}

\subsection{Cost of MESSY-P with respect to dimension}
As discussed in section~\ref{sec:comp_newMED_standrdMED}, for a given moment problem (fixed vector of basis functions), the cost of evaluating Lagrange multipliers in the proposed method is linear with respect to number of samples and independent of dimension --- a benefit of Monte Carlo integration. However, similar to the standard MED approach, the proposed MESSY estimate requires the inversion of a Hessian matrix $\bm L\in \mathbb R^{N_b \times N_b}$. Therefore, in addition to  integration, a MESSY cost estimate also includes solution of a linear system of equations with cost of order $\mathcal{O}(N_b^3)$. In this example, we consider the  MESSY-P  estimator with basis functions $\bm H = \big[x_1,...,x_d, x_1^2, x_1 x_2, ..., x_d^2\big] $ for estimating a target $d$-dimensional multivariate normal distribution Eq.~(\ref{eq:multi-variate-normal-distr}) with mean $\bm \mu$
\begin{flalign}
    \mu \sim \mathcal{N}(\bm 0,\frac{1}{2}\bm I)
\end{flalign}
and covariance matrix $\bm \Sigma$
\begin{flalign}
    &\bm \Sigma = \bm I + \bm U \bm U^T \\
    \mathrm{where}\ \ \ \  &U_{i,j} = \left\{\begin{matrix}
0& i<j \\
\xi & \text{otherwise}
\end{matrix}\right.
\ \ \ \ \text{and}\ \ \ \   \xi \sim \mathcal{N}( 0,\frac{1}{2})~.
\end{flalign}

%As explained in section~\ref{sec:comp_newMED_standrdMED}, the computational complexity of estimating the Lagrange multipliers in a $d$-dimensional space is linear with number of samples (Monte Carlo integration method). Regardless of how integrals are computed, the cost of MED increases cubically with the number of basis functions/moments $\mathcal{O}(N_b^3)$ as one needs to invert the Hessian matrix. 
Using 2nd order polynomial basis functions, the number of unique basis functions is $N_b=d + d(d+1)/2$, leading to a cost of order $\mathcal{O}((d + d(d+1)/2)^3)$ for a $d$-dimensional probability space. In Fig.~\ref{fig:cost_dim_N}-(a), we illustrate this by showing the normalized execution time (normalized by the time for $d=1$ and given sample size) against the dimension of the target density given $N\in \{ 100,\ 200,\ 400,\ 800,\ 1600,\ 3200\}$ samples. Furthermore, in order to highlight the linear dependence of cost with number of samples $N$ for a fixed moment problem, in Fig.~\ref{fig:cost_dim_N}-(b) we also report the normalized execution time of computing Lagrange multipliers (normalized by the time for $N=100$ and dimension $d$ of target density) as a function of number of samples.
\\ \ \\
Since the most expensive component of MESSY, i.e.  the computation of Lagrange multipliers, has been carried out on a single computer core with a direct solver, we confirm that MESSY algorithm can be used for density recovery in large number of dimensions at a feasible computational cost, regardless of the scaling with dimension. The limiting factor appears to be the storage associated with solving the linear system. In case a larger system than the one studied here are of interest, one may use iterative solvers, \mbox{see \citet{saad2003iterative}.}

\begin{figure}[h]
    \centering
    \hspace{-20pt}
    \begin{tabular}{cc}
    \includegraphics[scale=0.75]{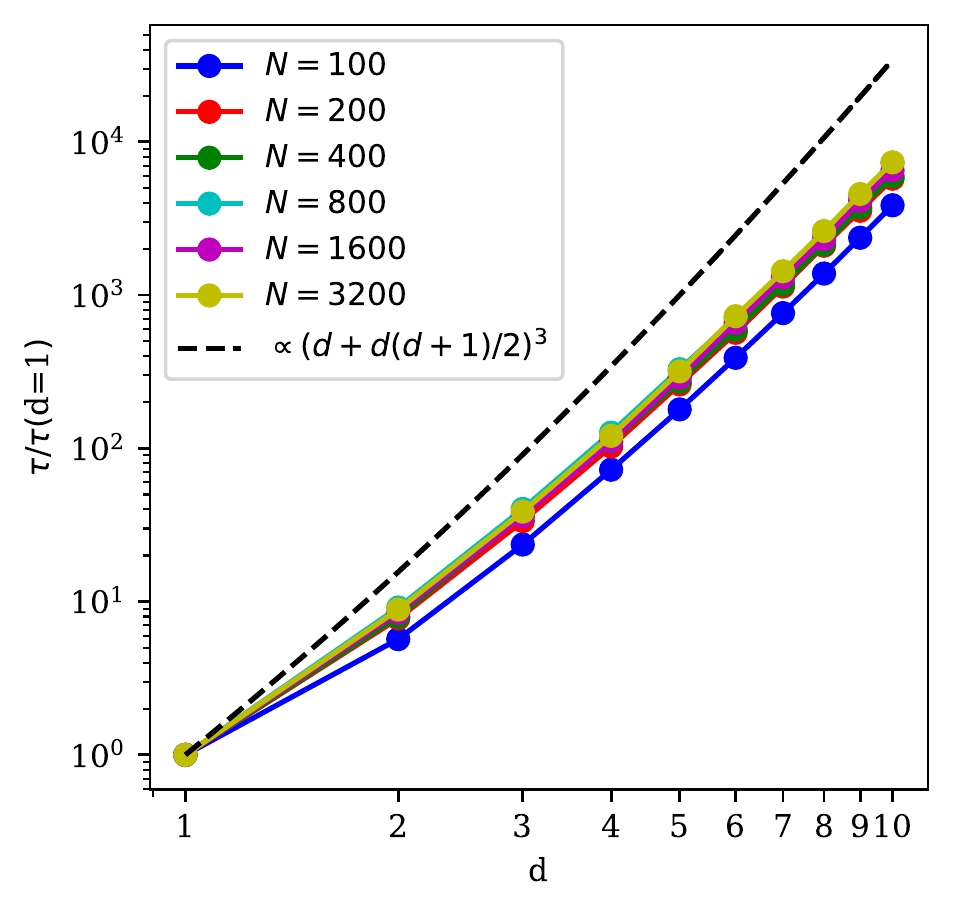}
    &
    \includegraphics[scale=0.75]{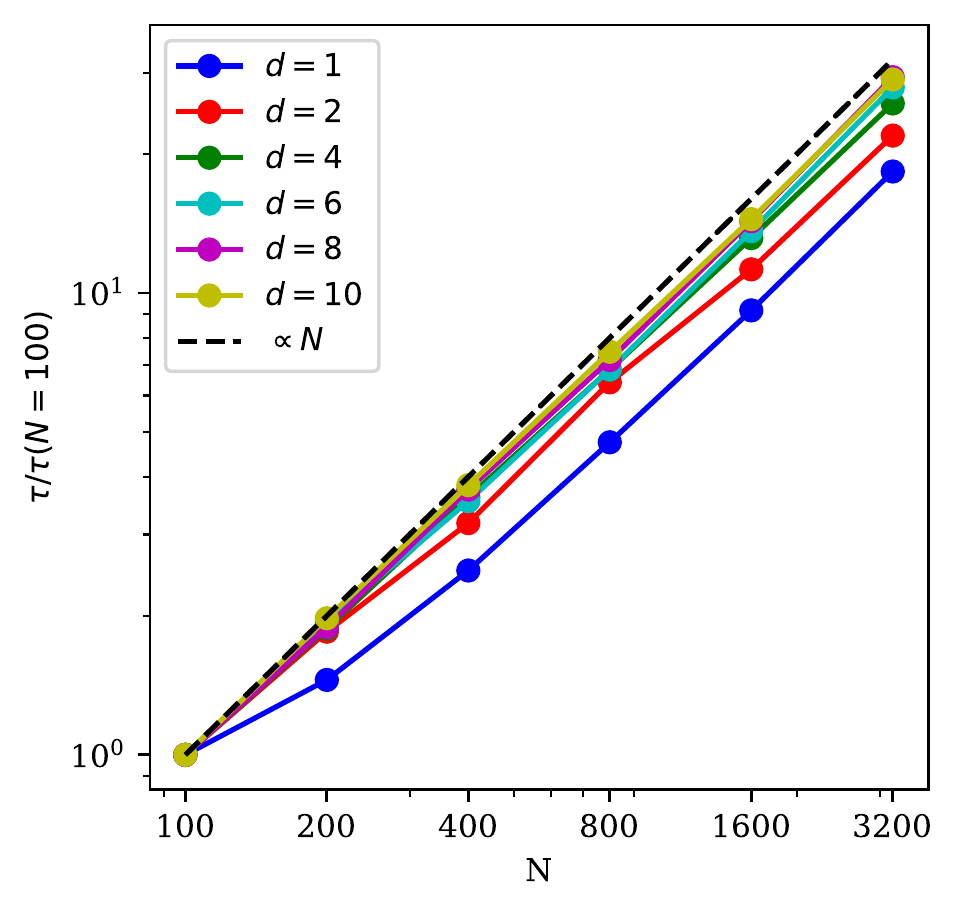}
    \\
 \hspace{23pt}   (a) & \hspace{23pt} (b)
    \end{tabular}
    \vspace{-0.25cm}
    \caption{Relative execution time $\tau$ of computing Lagrange multipliers given $N\in\{100,\ 200,\ 400,\ 800,\ 1600,\ 3200 \}$ samples of $d$-dimensional multivariate normal distribution function where $d=1,...,10$. Left (a) shows the normalized execution time versus dimension and right (b) the normalized execution time versus number of samples. The execution time is computed on a single core and single thread of $2.3$GHz Quad-Core Intel Core i7 processor and averaged over $5$ ensembles. 
    }
    \label{fig:cost_dim_N}
\end{figure}

\newpage
%%%%%%%%%%%%%%%%%%%%%%%%%
\section{Conclusion and Outlook}
\label{sec:conclusion}
 We present a new method for symbolically recovering the underlying probability density function of a given finite number of its samples. The proposed method uses the maximum entropy distribution as an ansatz thus guaranteeing positivity and least bias. We devise a method for finding parameters of this ansatz by considering a  Gradient flow in which the ansatz serves as the driving force. One main takeaway from this work is that the parameters of the MED ansatz can be  computed efficiently by solving a linear problem involving moments calculated from the given samples. We also show that the complexity associated with  finding Lagrange multipliers, the most expensive part of the algorithm, scales linearly with the number of samples in all dimensions. On the other hand, the cost of inverting the Hessian matrix with dimension $\mathbb{R}^{N_b\times N_b}$ is $\mathcal{O}(N_b^3)$, where  the number of basis functions $N_b$ grows with dimension. Overall, our numerical experiments show that densities of dimension 10 can be recovered using a single core of a typical workstation. It is worth noting that since the whole probability space is represented with a few parameters, the MESSY representation can reduce the training cost associated with the PDE-FIND \cite{rudy2017data} and SINDy \cite{brunton2016discovering} method, where in the latter the regression is done on a data set that discretizes the whole space with a large number of parameters that need to be fit. 
 %\textbf{
 Further detailed analysis of the trade-off between cost and accuracy in inferring densities in high-dimensional probability spaces will be addressed in future work.
 %}
 %\textbf{We leave out further detailed analysis of cost versus accuracy for inferring densities in high-dimensional probability space to the future works.}

 The second main takeaway from this work is that accurate density recovery does not necessarily require the use of high-order moments. In fact, increasing the number of complex but low-order basis functions leads to superior expressiveness and better assimilation of the data. For this reason, the proposed method is equipped with a Monte Carlo search in the space of smooth functions for finding basis functions to describe the exponent of the MED ansatz, using KL Divergence, calculated from the unknown-distribution samples, as an optimality criterion. Discontinuous densities are treated  by considering piece-wise continuous functions with support on the space covered by samples. 
\\ \ \\
We validate and test the proposed MESSY estimation approach against benchmark non-parametric (KDE) and  parametric (MxED) density estimation methods. In our experiments, we consider three canonical test cases; a bi-modal distribution, a distribution close to the limit of realizability, and a discontinuous distribution function. Our results suggest that MESSY estimation exhibits several positive attributes compared to existing methods. Specifically, while retaining some of the most desirable features associated with MED, namely  non-negativity, least bias, and matching the moments of the unknown distribution, it outperforms standard maximum-entropy-based approaches for two reasons. First,  it uses samples of the target distribution in the evaluation of the Hessian, which has a linear cost with respect to the dimension of the random variable. Second, the resulting linear problem for finding the Lagrange multipliers from moments is significantly more efficient than the  Newton line search used by the classical MED approach. Moreover, our multi-level algorithm allows for recovery of more complex distributions compared to the standard MED approach. 
%Regarding other already existing density recovery methods, our method performs well in the limit of a small to moderate number of samples by providing a low-bias and tractable symbolic description of the unknown density at a reasonable computational cost. 
Combining the efficient approach of finding maximum entropy density via a linear system with the symbolic exploration for the optimal basis functions paves the way for achieving low bias, consistent, and expressive density recovery from samples. 

%\textbf{
Although the proposed MESSY estimate alleviates a number of numerical challenges associated with finding the MED given samples, it cannot expand the space of the MED solution. In other words, if the moment problem does not permit any MED solutions, e.g. \citep{junk1998domain}, the MESSY estimate of MED fails to infer any densities since there is no MED solution to be found. Secondly, there are no guarantees that the coefficient of the leading term in the exponent of the MED will be negative. Hence, in unbounded domains the MESSY estimate may diverge in the tails of the distribution. Both of these issues can be resolved by regularizing the MED ansatz, e.g. via incorporating a Wasserstein distance from a prior distribution as suggested by \citet{sadr2023wasserstein}.
%}

The most important, perhaps,  distinguishing characteristic of the proposed methodology from non-parametric approaches, such as KDE, is the ability to recover a tractable symbolic estimator of the unknown density. This can be beneficial in a number of applications, such as, for example, finding the underlying dynamics of a stochastic process. In such a case, where it is known that the transition probability takes an exponential form, this method can be used for recovering the drift and diffusion terms of an Ito process given a sequence of random variables, see e.g. \cite{Risken1989,oksendal2013stochastic}. Furthermore, the proposed method may be used to find a relation between moments of interest which could be helpful in determining the  parameters of a statistical model, such as a closure for a hierarchical moment description of fluid flow far from equilibrium. 
%\textbf{
That said, exploring the efficiency of our algorithm in high-dimensional contexts is a critical next step, considering the complexity of modern machine learning datasets. Other
%}
possible directions for future work include: (i)  data-driven discovery of  governing dynamical laws from samples; and (ii) applications to variance reduction.
% possible directions for future work include: (i) application of the proposed methodology to high-dimensional distribution functions, including applications to recovering governing dynamical laws from samples; and (ii) applications to variance reduction. 

%Although derived for any arbitrary dimension, our validation was limited to one-dimensional distributions. High-dimensional distributions will be treated in detail in future work.
% \newpage
\subsubsection*{Acknowledgments}
We would like to acknowledge Prof. Youssef Marzouk and Evan Massaro for their helpful comments and suggestions. M. Sadr thanks Hossein Gorji for his stimulating and constructive comments about the core ideas presented in this paper.  T. Tohme was supported by Al Ahdab Fellowship. M. Sadr acknowledges funding provided by the German Research Foundation (DFG) under grant number SA 4199/1-1.

% \newpage
\bibliography{main}
\bibliographystyle{tmlr}

\newpage
\appendix

\counterwithin{equation}{section}
\section{Maximum entropy distribution function}
\label{sec:app_MED}

The maximum entropy distribution (MED) function finds the least-biased closure for the moment problem. Given $N_b$ realizable moments $\bm \mu \in \mathbb{R}^{N_b}$ associated with polynomial basis functions $\bm H $ of the unknown  distribution function, MED is obtained by minimizing the Shannon entropy with constraint moments using the method of Lagrange multipliers as \cite{hauck2008convex}
\begin{flalign}
C[\mathcal{F}(\bm x)]:= &\int \mathcal{F}(\bm x) (\log\big(\mathcal{F}(\bm x)\big)-1) d \bm x 
- \sum_{i=1}^{N_b} \lambda_i  \left(\int H_i(\bm x) \mathcal{F}(\bm x) d \bm x-\mu_i(\bm x)\right)~.
\label{eq:MED_cost_app}
\end{flalign}
By taking the variational derivative of  functional \ref{eq:MED_cost_app}, the extremum is found as
\begin{flalign}
\mathcal{F}(\bm x) = \frac{1}{Z}\exp\left(\sum_{i=1}^{N_b} \lambda_i H_i(\bm x) \right),\qquad
\text{where}\ \ Z = \int \exp\left(\sum_{i=1}^{N_b} \lambda_i H_i(\bm x) \right) d \bm x,
\label{eq:calF_MED}
\end{flalign}
which is referred to as the maximum entropy distribution function. The Lagrange multipliers $\bm \lambda$ appearing in Eq.~(\ref{eq:calF_MED}) can be found using the Newton-Raphson approach. As formulated in \citep{debrabant2017micro},  the unconstrained dual formulation $D(\bm \lambda)$ provides us with the gradient $\bm g = \nabla D(\bm \lambda) $ and \mbox{Hessian $\bm L(\bm \lambda)=
\nabla^2 D(\bm \lambda)$ as}
\begin{flalign}
g_i &= \mu_i - \frac{1}{Z} \int H_i  \exp\left(\sum_{k=1}^{N_b} \lambda_k H_k  \right) d \bm x\ \ \ \textrm{for}\ i=1,...,N_b
\label{eq:gradient_MED}
\\
\text{and} \ \ \ 
L_{i,j}&= - \frac{1}{Z} \int H_i H_j \exp\left(\sum_{k=1}^{N_b} \lambda_k H_k  \right) d \bm x \ \ \ \textrm{for}\ i,j=1,...,N_b~.
\label{eq:Hessian_MED}
\end{flalign}
Once the gradient and Hessian are computed, the Lagrange multipliers $\bm \lambda$ can be updated via
\begin{flalign}
\bm \lambda \leftarrow \bm \lambda -  \bm L^{-1}(\bm \lambda) \bm g(\bm \lambda)%~.
\label{eq:lambda_update_MED}
\end{flalign}
as detailed in Algorithm \ref{alg:Newton_method_MED}.

\begin{algorithm}[H]
 \caption{Newton's method for finding Lagrange multipliers of MED given moments $\bm \mu$ for a given tolerance $\epsilon$.}
\SetAlgoLined
\KwInput{$\bm \mu$, $\ \bm \lambda_0$}
 Initialize $\bm \lambda \leftarrow \bm \lambda_0$\;
 Compute gradient $\bm g$ and Hessian $\bm L$, i.e. Eqs.~(\ref{eq:gradient_MED})-(\ref{eq:Hessian_MED})\;
 \While{$||\bm g||>\epsilon$}{
  Update $\bm \lambda \leftarrow \bm \lambda - \bm L^{-1} \bm g$\;
  Update gradient $\bm g$ and Hessian $\bm L$ with the new $\bm \lambda$ via numerical integration of Eqs.~(\ref{eq:gradient_MED})-(\ref{eq:Hessian_MED})\;
 }
 \textbf{Return} $\bm{\lambda}$\;
 \label{alg:Newton_method_MED}
\end{algorithm}

\newpage
\section{Maximum cross-entropy distribution function}
\label{sec:app_crossMED}

The maximum cross-entropy distribution function (MxED) finds the least-biased closure for the moment problem given $N_b$ realizable moments $\bm \mu \in \mathbb{R}^{N_b}$ associated with polynomial basis functions $\bm H$ of the unknown distribution function along with the prior $\mathcal{F}^\mathrm{Prior}$ as the input. In other words, in addition to the moments of the target distribution, in the MxED method we also have access to a prior distribution $\mathcal{F}^\mathrm{Prior}$ as well as $N$ samples of the former, i.e. $\{ \bm X^\mathrm{prior}_j\}_{j=1}^N \sim \mathcal{F}^{\mathrm{Prior}}$.  MxED is obtained by minimizing the Shannon cross-entropy from the prior with constraint on moments using the method of Lagrange multipliers via the functional
\begin{flalign}
C[\mathcal{F}(\bm x)]:= &\int \mathcal{F}(\bm x) \log\left(\frac{\mathcal{F}(\bm x)}{\mathcal{F}^\mathrm{Prior}(\bm x)}\right) d \bm x 
+ \sum_{i=1}^{N_b} \lambda_i  \left(\int H_i(\bm x) \mathcal{F}(\bm x) d \bm x-\mu_i(\bm x)\right)~.
\label{eq:MxED_cost}
\end{flalign}
By taking the variational derivative of  functional \ref{eq:MxED_cost}, the extremum is found to be
\begin{flalign}
\mathcal{F}(\bm x) = \frac{1}{Z}\,\mathcal{F}^\mathrm{Prior}(\bm x) \exp\left(\sum_{i=1}^{N_b} \lambda_i H_i(\bm x) \right), \qquad 
\text{where}\ Z =  \int \mathcal{F}^\mathrm{Prior}(\bm x) \exp\left(\sum_{i=1}^{N_b} \lambda_i H_i(\bm x) \right) d \bm x.\label{eq:calF_MxED}
% \label{eq:calF_MxED_norm}
\end{flalign}
Similar to the maximum entropy distribution function, the Lagrange multipliers $\bm \lambda$ appearing in Eq.~(\ref{eq:calF_MxED}) can be found by following the Newton-Raphson approach. As formulated in \citep{debrabant2017micro},  the unconstrained dual formulation $D(\bm \lambda)$ provides us with the gradient $\bm g = \nabla D(\bm \lambda) $ \mbox{and Hessian $\bm L(\bm \lambda)=
\nabla^2 D(\bm \lambda)$ as}
\begin{flalign}
g_i &= \mu_i - \frac{1}{Z}\int \mathcal{F}^\mathrm{Prior} H_i  \exp\left(\sum_{k=1}^{N_b} \lambda_k H_k  \right) d \bm x\ \ \ \textrm{for}\ i=1,...,N_b
\label{eq:gradient_MxED}
\\
 \text{and} \ \ \
L_{i,j}&= - \frac{1}{Z}\int \mathcal{F}^\mathrm{Prior} H_i H_j \exp\left(\sum_{k=1}^{N_b} \lambda_k H_k  \right) d \bm x \ \ \ \textrm{for}\ i,j=1,...,N_b~.
\label{eq:Hessian_MxED}
\end{flalign}
Once the gradient and Hessian are computed, the Lagrange multipliers $\bm \lambda$ can be updated via
\begin{flalign}
\bm \lambda \leftarrow \bm \lambda -  \bm L^{-1}(\bm \lambda) \bm g(\bm \lambda)~.
\label{eq:lambda_update_MED_app}
\end{flalign}
Since we have access to the samples of $\bm X^\mathrm{Prior}\sim \mathcal{F}^\mathrm{Prior}$, we use the given samples to compute the gradient and Hessian, i.e.
\begin{flalign}
    g_i &\approx  \mu_i - \left\langle  H_i \big(\bm X^\mathrm{Prior}\big) \,  \right\rangle_W \ \ \ \textrm{for}\ i=1,...,N_b
\label{eq:gradient_MxED_samples}
\\
% \text{and} \ \ \ 
L_{i,j}&\approx - \left \langle H_i\big(\bm X^\mathrm{Prior}\big) H_j \big(\bm X^\mathrm{Prior}\big)\,  \right \rangle_W \ \ \ \textrm{for}\ i,j=1,...,N_b~,
\label{eq:Hessian_MxED_samples}
\end{flalign}
where $W(\bm X^{\mathrm{Prior}})=\exp\left( \sum_{k=1}^{N_b} \lambda_k  H_k \big(\bm X^\mathrm{Prior}\big) \right)$ denotes weights for calculating moments using importance sampling, i.e. $\langle \phi(\bm X) \rangle_W:=\sum_{j=1}^N \phi(\bm X_j) W(\bm X_j)/\sum_{j=1}^N  W(\bm X_j)$. More details can be found below (Algorithm  \ref{alg:Newton_method_MxED}).

\begin{algorithm}[H]
 \caption{Newton's method for finding Lagrange multipliers of MxED given moments $\bm \mu$ and samples of prior $\bm X^\mathrm{Prior}\sim \mathcal{F}^\mathrm{Prior}$ for a given tolerance $\epsilon$.}
\SetAlgoLined
\KwInput{$\bm \mu$, $\bm X^\mathrm{Prior}\sim \mathcal{F}^\mathrm{Prior}$}
 Initialize $\bm \lambda \leftarrow \bm 0$\;
 Compute gradient $\bm g$ and Hessian $\bm L$, i.e. Eqs.~(\ref{eq:gradient_MxED_samples})-(\ref{eq:Hessian_MxED_samples})\;
 \While{$||\bm g||>\epsilon$}{
  Update $\bm \lambda \leftarrow \bm \lambda - \bm L^{-1} \bm g$\;
  Update gradient $\bm g$ and Hessian $\bm L$ with the new $\bm \lambda$ using samples, i.e. Eq.~(\ref{eq:gradient_MxED_samples})-(\ref{eq:Hessian_MxED_samples})\;
 }
 \textbf{Return} $\bm \lambda$\;
 \label{alg:Newton_method_MxED}
\end{algorithm}
\newpage

\section{Ablation studies}
\label{sec:ablation_MESSY}

In this section, we report on the results of an ablation study that shows the effects of each component in the proposed MESSY solution. The four components of MESSY Estimation are: i) the symbolic component, ii) the multi-level component, iii) the orthogonalization, and iv) the cross-entropy step.   As we have already compared MESSY-S and MESSY-P (MESSY without the symbolic regression component) throughout the paper, we conduct an ablation study on the remaining components, i.e. the multi-level, the orthogonalization, and the cross-entropy step.

\subsection{Multi-level component of MESSY}

First, in order to show the effect of the multi-level part of the MESSY algorithm, let us consider the bi-modal density near the limit of realizability, see \ref{sec:limit_realizability}. In particular, let us consider the MESSY-P estimate with $4$th order polynomials as  basis functions. In Fig.~\ref{fig:ablation_multilevel}, we show that by disabling the multi-level step, the estimated density covers only one of the peaks and entirely misses the other one. We believe that this is due to the high conditionality of the matrix $\bm L^\text{ME}$. However, MESSY-P with a multi-level  step accurately recovers both peaks of the underlying density.
%%%%%%%%%%%%%%%%%%%%%%%%%
\begin{figure}[h]
    \centering
    \hspace{-20pt}
    %\begin{tabular}{cc}
    \includegraphics[scale=0.75]{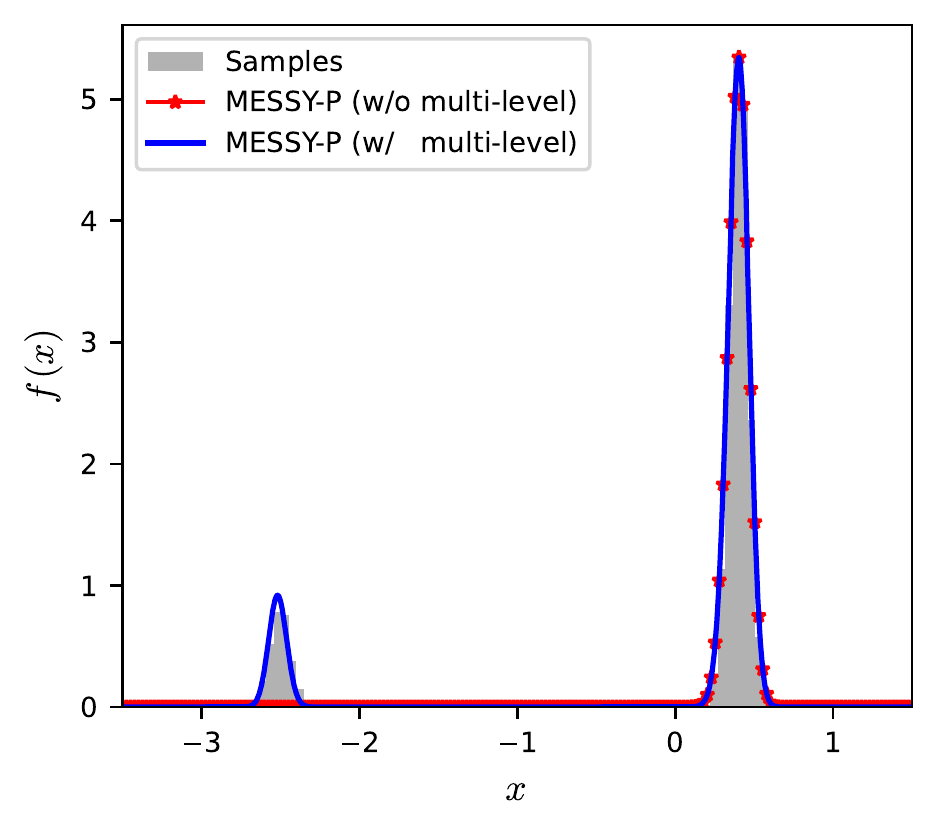}
    %\end{tabular}
    \vspace{-0.25cm}
    \caption{Ablation study on the multi-level component of MESSY-P estimation for the case of target distribution near the limit of realizability.}
    \label{fig:ablation_multilevel}
\end{figure}

\subsection{Orthogonalization}
Another important element of MESSY process is the orthogonalization of basis functions given samples, i.e. the modified Gram-Schmidt Algorithm~\ref{alg:modifiedGS}. As an example, here we consider the MESSY-P estimate of target bimodal distribution function presented in \ref{sec:bimodal_dist} as a test case and report the condition number of the matrix $\bm L^\mathrm{ME}$ that needs to be inverted to compute the Lagrange multipliers in Eq.~(\ref{eq:lambda_ME_gradlogf}). As shown in Fig.~\ref{fig:ablation_orth}, the condition number of $\bm L^\mathrm{ME}$ increases with the polynomial order, making the linear system (\ref{eq:lambda_ME_gradlogf}) stiff and difficult to solve. However, by orthogonalizing the basis functions, we can maintain a low condition number for the outcome $\bm L^\mathrm{ME}$ which makes the resulting linear system tractable.    
\begin{figure}[h]
    \centering
    \hspace{-22pt}
    %\begin{tabular}{cc}
    \includegraphics[scale=0.72]{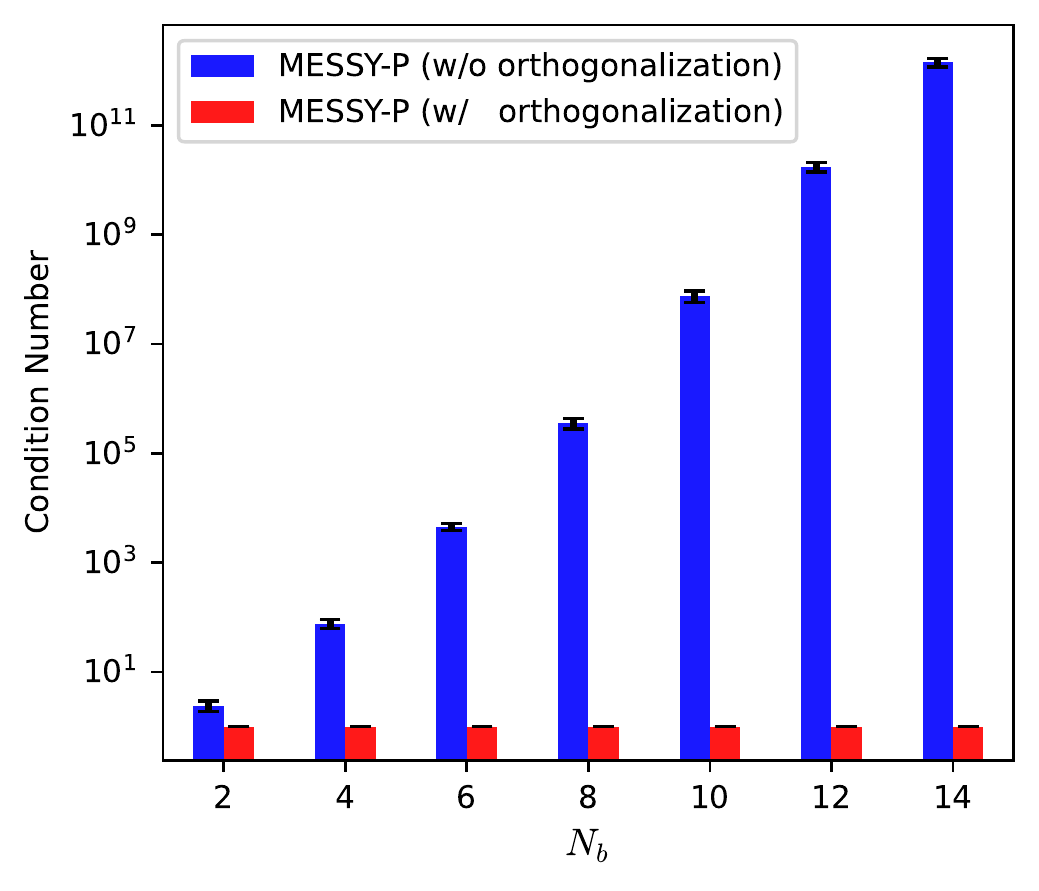}
    %\end{tabular}
    \vspace{-0.25cm}
    \caption{Ablation study on the orthogonalization component of MESSY-P algorithm for the case of bimodal distribution. Relation between condition number of matrix $\bm L^\mathrm{ME}$ and the order of considered polynomial with and without the orthogonalization step.}
    \label{fig:ablation_orth}
\end{figure}

\subsection{Cross-entropy step}
The final step in the MESSY-P procedure is the cross-entropy. The goal of this step is to reduce the bias that may have been introduced by the multi-level component of the algorithm. In the MESSY Algorithm \ref{alg:MESSY-Full}, 
%{\bf here where? "in the standard algorithm" perhaps? because actually here we don't do that to see what happens, right?}{\color{blue}[right. in the MESSY algorithm.]}
we take the outcome of the multi-level step as the prior, and correct Lagrange multipliers to match the target moments. In Fig.~\ref{fig:ablation_cross_entropy}, we compare the MESSY-P density estimate of the discontinuous density from the test case presented in \ref{sec:discont_distr} with and without cross-entropy step. As expected, we observe that the cross-entropy step is essential in recovering distributions with discontinuities. This is due to the fact that the solution obtained via gradient flow assumes a smooth target density. Hence, it fails to find the Lagrange multipliers for the target discontinuous pdf. However, the cross entropy step only assumes that the target distribution is integrable %with existing moments {\bf 
to the extent that the given moments exist. This weaker  condition is essential to recovering discontinuous distributions, e.g. the one showed in Fig.~\ref{fig:ablation_cross_entropy}.

\begin{figure}[h]
    \centering
    \hspace{-22pt}
    %\begin{tabular}{cc}
    \includegraphics[scale=0.69]{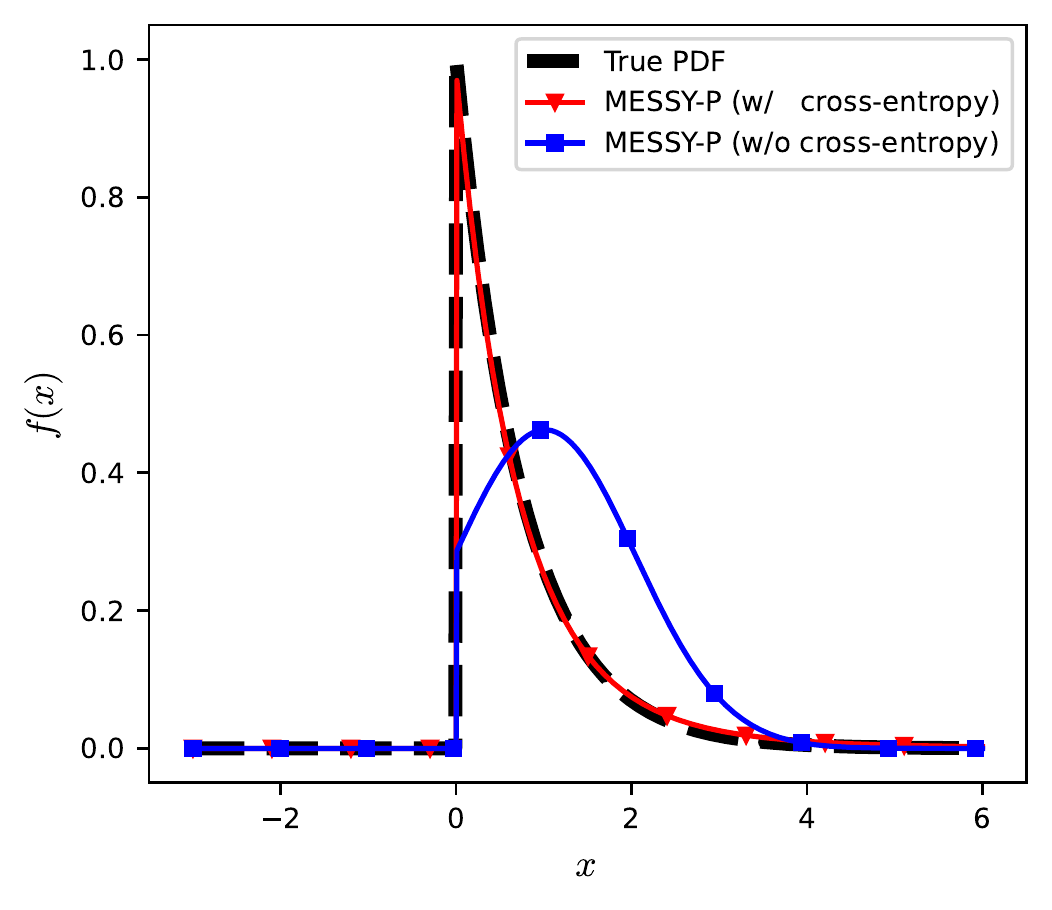}
    %\end{tabular}
    \vspace{-0.25cm}
    \caption{Ablation study on the cross-entropy step of  MESSY-P estimation of a discontinuous density.}
    \label{fig:ablation_cross_entropy}
\end{figure}

\section{Histogram estimate to the bi-modal distribution function}
\label{sec:hist_bi-modal}
Figure \ref{fig:bi-modal-dist_KDE_MxED_MESSY_Hist} provides a comparison between a histogram, KDE, MESSY-P and MESSY-S estimates of the underlying bi-modal distribution function considered in \ref{sec:bimodal_dist} given $N=100,\ 1000, 10000$ samples. Clearly, MED estimates provide a better solution than non-parametric ones, i.e. histogram and KDE, when not many samples are available. However, as more samples are considered, the non-parametric estimators become more accurate than MESSY (or any parametric estimator) with fixed number of basis functions.

\begin{comment}
We consider a one-dimensional bi-modal distribution function constructed by mixing two Normal distribution functions $\mathcal{N}(x\,|\,\mu, \sigma)$, i.e.
 \begin{eqnarray}
f(x)=\alpha \,\mathcal{N}(x\,|\,\mu_1,\sigma_1)+(1-\alpha)\,\mathcal{N}(x\,|\,\mu_2,\sigma_2),
\label{eq:bi-modal_dist}
\end{eqnarray}
with $\alpha=0.5$, means
$\mu_1 = -0.6$ and $\mu_2=0.7$, and standard deviations $\sigma_1=0.3\ \text{and}\ \sigma_2 = 0.5$.
\\ \ \\
We compare results from MESSY, KDE and MxED for three different sample sizes, namely $100$, $1000$, and $10,000$ samples of $f$. For MxED and MESSY-P, we use $N_b=N_m=4$.
In the case of MESSY-S, we randomly create $N_b$ basis functions which are $\mathcal{O}(x^4)$ (where $N_b$ is sampled uniformly within $\{2,\hdots,8\}$).
%We let MESSY-S run for $N_{\mathrm{iters}}=5$ stochastic rounds of search. 
Both MESSY results are subject to a cross-entropy correction step with $N_b=4$  polynomial moments. We also show the histogram of the samples. Results are shown in Figure~\ref{fig:bi-modal-dist_KDE_MxED_MESSY_Hist}. Clearly, the MxED and MESSY methods provide a better estimate compared to KDE when a small number of samples is available; KDE suffers from bias introduced by the smoothing kernel. 
\end{comment}
%%%%%%%%%%%%%%%%
\begin{figure}[h]
 \hspace{-1.2cm}
    \centering
    \begin{tabular}{ccc}
     \includegraphics[scale=0.55]
     {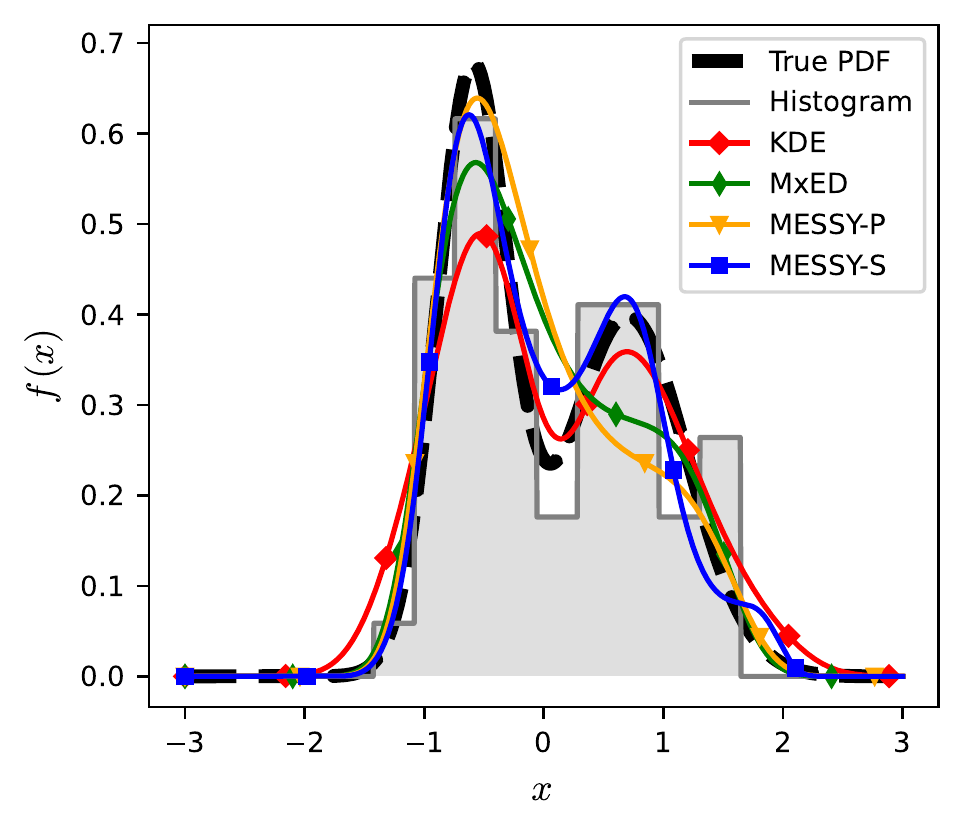}
     %{} 
    &
\includegraphics[scale=0.55]
{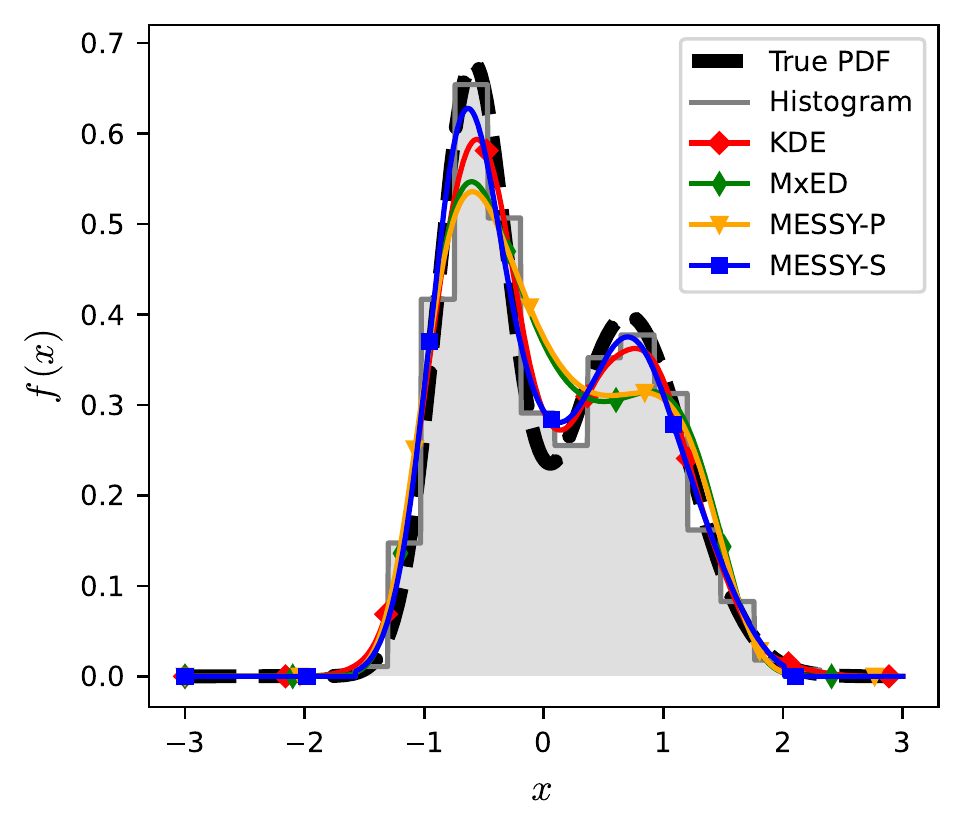}
% {}
&
\includegraphics[scale=0.55]
{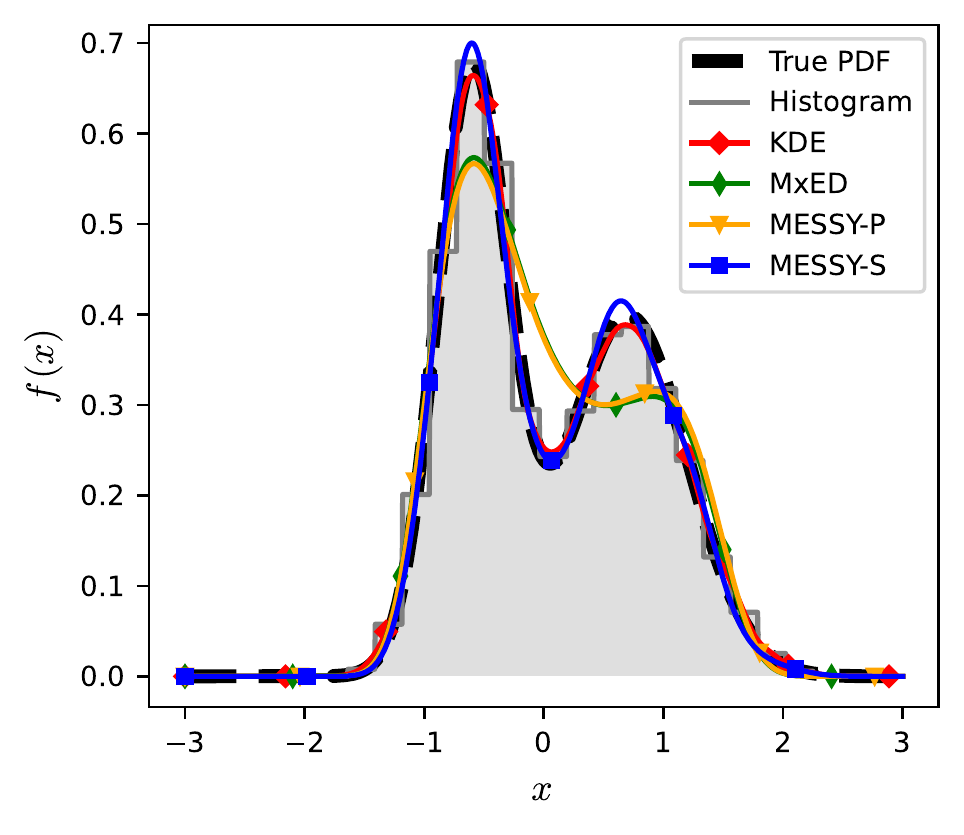}%{}
\\
\hspace{17pt}
(a) 100 samples 
& \hspace{12pt}
(b) 1,000 samples
& \hspace{15pt}
(c) 10,000 samples
    \end{tabular}
    \caption{Density estimation using KDE, MxED, MESSY-P, MESSY-S and histogram given (a) 100, (b) 1,000, and (c) 10,000 samples.}
    \label{fig:bi-modal-dist_KDE_MxED_MESSY_Hist}
\end{figure}
%%%%%%%%%%%%%%%%

\newpage
\section{Solution found by MESSY for the considered test cases}
\label{sec:MESSY_expr_exponential_density}

\begin{table}[h]%[htbp]%
\caption{Density expressions recovered by our MESSY estimation method for several distributions. \label{messy_expr}}
\vspace{-1mm}
\tabcolsep=0.2cm
% \renewcommand{\arraystretch}{3}
%\fontsize{10pt}{10.5pt}\selectfont
\centering
\scalebox{0.65}{
\begin{tabular}{l|c|l}
\toprule
\textbf{Example} & \multicolumn{2}{ c }{\textbf{Expression}} \\
\midrule
Bimodal 1D& MESSY-P &  
$\hat f(x) =
0.288 e^{- 0.017 x^{10} + 0.106 x^{9} - 0.084 x^{8} - 0.659 x^{7} + 1.209 x^{6} + 1.179 x^{5} - 3.722 x^{4} + 0.075 x^{3} + 2.693 x^{2} - 0.612 x}
$
\\
%\cmidrule{2-3}
\hhline{~--}
                 & MESSY-S &  
 $\hat f(x) = 
0.993 e^{- 1.85 x^{2} - 1.162 x \cos{\left(1.5 x \right)} + 0.232 x - 0.652 \cos{\left(x \right)} - 0.424 \cos{\left(2 x \right)} - 0.591 \cos{\left(3.5 x \right)} + 0.47 \cos{\left(\cos{\left(3.5 x \right)} \right)}}
$ \\
\midrule
Limit of & MESSY-P & 
$\hat f(x) = 
1.591 \cdot 10^{-6} e^{- 12.876 x^{4} - 56.46 x^{3} - 38.072 x^{2} + 62.617 x} + 5.282 \cdot 10^{-27} e^{- 7.969 x^{4} - 28.862 x^{3} - 4.342 x^{2} + 20.938 x}
$
\\
%\cmidrule{2-3}
\hhline{~--}
             Realizability    & MESSY-S & 
                 $\hat{f}(x) = 4.134 \cdot 10^{81} e^{- 21.893 x^{2} \sin{\left(x \right)} + 0.025 x^{2} + 117.267 x \cos{\left(x \right)} + 0.861 x + 395.584 \sin^{2}{\left(x \right)} - 57.393 \sin{\left(x \right)} + 200.421 \cos{\left(x \right)} - 744.874 \cos{\left(\cos{\left(x \right)} \right)}}$
                 \\
\midrule
Discontinuous & MESSY-P & 
$\hat f(x) = 
\begin{cases} 1.096 \,e^{0.086 x^{2} - 1.298 x} & 
\text{if}\: x \geq 0 \\0 & \text{otherwise} \end{cases}
% \text{if}\: 0 \leq x \leq 9.645 \\0 & \text{otherwise} \end{cases}
$
\\
%\cmidrule{2-3}
\hhline{~--}
                 & MESSY-S & 
          $\hat f(x) = 
          \begin{cases} 0.293 \,e^{- 0.145 x^{2} + 0.018 x + 0.251 \cos{\left(x \right)} \cos{\left(1.5 x \right)} + 0.713 \cos{\left(x \right)} + 0.09 \cos{\left(1.5 x \right)} \cos{\left(3 x \right)} + 0.076 \cos{\left(3.5 x \right)}} &
          \text{if}\: x \geq 0 \\0 & \text{otherwise} \end{cases}
          %\text{if}\: 0 \leq x \leq 9.645 \\0 & \text{otherwise} \end{cases}
% \begin{cases} 1.013\, e^{x \left(0.066 x - 1.175\right)} & \text{if}\:\: 0 \leq x \leq 10.394 \\0 & \text{otherwise} \end{cases}
$
                 \\ 
\midrule
Gaussian 2D & MESSY-P &
$\hat f(x_1, x_2) =
0.18 e^{- 0.606 x_{1}^{2} + 0.572 x_{1} x_{2} + 0.029 x_{1} - 0.663 x_{2}^{2} - 0.075 x_{2}}
$
% {\color{blue} $\hat f(x) =
% 0.17 e^{- 0.626 x_{0}^{2} + 0.61 x_{0} x_{1} + 0.049 x_{0} - 0.588 x_{1}^{2} + 0.055 x_{1}}
% $}
\\
%\cmidrule{2-3}
\hhline{~--}
                 & MESSY-S & 

$\hat f(x_1, x_2) = 
0.182 e^{- 0.648 x_{1}^{2} + 0.598 x_{1} x_{2} - 0.018 x_{1} - 0.644 x_{2}^{2} - 0.044 x_{2}}
$ \\

% {\color{blue} $\hat f(x) = 
% 0.168 e^{- 0.623 x_{0}^{2} + 0.598 x_{0} x_{1} + 0.054 x_{0} - 0.576 x_{1}^{2} + 0.024 x_{1}}
% $ }\\

\midrule
Gamma-exp & MESSY-P &
$\hat f(x_1, x_2) =
\begin{cases}
% 0.344 e^{0.306 x_{1}^{2} - 0.056 x_{1} x_{2} - 2.251 x_{1} - 0.382 x_{2}^{2} + 0.906 x_{2}}
% 0.384 e^{- 0.162 x_{1}^{2} - 0.192 x_{1} x_{2} - 1.579 x_{1} - 0.463 x_{2}^{2} + 1.222 x_{2}} 

0.477 e^{0.225 x_{1}^{2} - 0.04 x_{1} x_{2} - 2.264 x_{1} - 0.376 x_{2}^{2} + 0.913 x_{2}} & x_1 \geq 0,\, x_2 \geq 0 \\
0 & \text{otherwise} \\
\end{cases}

$

\\
%\cmidrule{2-3}
\hhline{~--}
                 & MESSY-S & 

 $\hat f(x_1, x_2) = 
\begin{cases}
% 0.274 e^{- 0.551 x_{1}^{2} + 0.071 x_{1} x_{2} - 0.842 x_{1} - 0.046 x_{2}^{3} - 0.239 x_{2}^{2} + 0.857 x_{2} + 0.494 \cos{\left(3.5 x_{1} \right)} + 0.092 \cos{\left(6 x_{1} \right)} - 0.268 \cos{\left(3.5 x_{2} \right)}}

0.017 e^{- 0.13 x_{1}^{2} + 0.016 x_{1} x_{2} + 0.15 x_{1} + 0.017 x_{2}^{2} \cos{\left(2.5 \cos{\left(2 x_{2} \right)} \right)} - 0.37 x_{2}^{2} + 0.858 x_{2} + 2.654 \cos{\left(x_{1} \right)} + 0.334 \cos{\left(3.5 x_{1} \right)} - 0.437 \cos{\left(2.5 x_{2} \right)}}
& x_1 \geq 0,\, x_2 \geq 0 \\
0 & \text{otherwise} \\
\end{cases}
$ \\

\bottomrule
\end{tabular}
}
\end{table}

\end{document}